\DeclareAcronym{cli} {
    short = CLI,
    long = Command Line Interface,
    class = abbrev
}
\definecolor{ckeyword}{HTML}{7F0055}
\definecolor{ccomment}{HTML}{3F7F5F}
\definecolor{cstring}{HTML}{2A0099}
\lstdefinestyle{numbers}{
	numbers=left,
	framexleftmargin=20pt,
	numberstyle=\tiny,
	firstnumber=auto,
	numbersep=1em,
	xleftmargin=2em
}
\lstdefinestyle{layout}{
	frame=none,
	captionpos=b,
}
\lstdefinestyle{comment-style}{
	morecomment=[l]//,
	morecomment=[s]{/*}{*/},
	commentstyle={\color{ccomment}\itshape},
}
\lstdefinestyle{string-style}{
	morestring=[b]",%
	morestring=[b]',%
	stringstyle={\color{cstring}},
	showstringspaces=false,%
}
\lstdefinestyle{keyword-style}{
	keywordstyle={\ttfamily\bfseries},
	morekeywords={
		function,
		constructor,
		int,
		bool,
		return,
		returns,
		uint
	},
	morekeywords = [2]{},
	keywordstyle = [2]{\text},
	sensitive=true,
}
\lstdefinestyle{input-encoding}{
	inputencoding=utf8,
	extendedchars=true,
	literate=
	{ℝ}{$\reals$}1%
	{→}{$\rightarrow$}1%
	{α}{$\alpha$}1%
	{β}{$\beta$}1%
	{λ}{$\lambda$}1%
	{θ}{$\theta$}1%
	{ϕ}{$\phi$}1%
}
\lstdefinestyle{escaping}{
	moredelim={**[is][\color{blue}]{\%}{\%}},
	escapechar=|,
	mathescape=true
}
\lstdefinestyle{default-style}{
	basicstyle=\fontencoding{T1}\ttfamily\footnotesize,
	style=numbers,
	style=layout,
	style=comment-style,
	style=string-style,
	style=keyword-style,
	style=input-encoding,
	style=escaping,
	tabsize=2,
	upquote=true
}
\lstdefinelanguage{BASIC}{
	language=C++,
	style=default-style
}[keywords,comments,strings]%
\newcommand{\crefrangeconjunction}{--}
\crefname{listing}{Lst.}{listings}
\crefname{line}{Lin.}{Lin.}
\crefname{appendix}{App.}{App.}
\newcommand{\app}[1]{%
	\ifbool{includeappendix}{\cref{#1}}{the appendix}%
}
\newcommand{\App}[1]{%
	\ifbool{includeappendix}{\cref{#1}}{The appendix}%
}
\def\eqref#1{equation~\ref{#1}}
\def\1{\bm{1}}
\def\rs{{\textnormal{s}}}
\def\vf{{\bm{f}}}
\def\vx{{\bm{x}}}
\def\mI{{\bm{I}}}
\DeclareMathAlphabet{\mathsfit}{\encodingdefault}{\sfdefault}{m}{sl}
\SetMathAlphabet{\mathsfit}{bold}{\encodingdefault}{\sfdefault}{bx}{n}
\newcommand{\R}{\mathbb{R}}
\newcommand{\softmax}{\mathrm{softmax}}
\DeclareMathOperator*{\argmax}{arg\,max}
\theoremstyle{plainnat} %
\newtheorem{theorem}{Theorem}[section]
\newtheorem*{theorem*}{Theorem}
\newcommand{\nclass}{m}
\newcommand{\bc}[1]{\mathcal{#1}}
\newcommand{\prob}{\mathcal{P}}
\newcommand{\RNM}{\texttt{ResNet50}\xspace}
\newcommand{\RNB}{\texttt{ResNet110}\xspace}
\newcommand{\EN}{\texttt{EfficientNet-B7}\xspace}
\newcommand{\LANET}{\texttt{LaNet}\xspace}
\newcommand{\cifar}{CIFAR10\xspace}
\newcommand{\IN}{ImageNet\xspace}
\newcommand{\tool}{\aces}
\newcommand{\aces}{\textsc{Aces}\xspace}
\newcommand{\ace}{\textsc{Ace}\xspace}
\newcommand{\smoothadv}{\textsc{SmoothAdv}\xspace}
\newcommand{\macer}{\textsc{Macer}\xspace}
\newcommand{\consistency}{\textsc{Consistency}\xspace}
\newcommand{\cohen}{\textsc{Gaussian}\xspace}
\newcommand{\abstain}{{\ensuremath{\oslash}}\xspace}
\title{Robust and Accurate -- Compositional \\Architectures for Randomized Smoothing}
\author{%
	Miklós Z. Horváth, Mark Niklas Müller, Marc Fischer, Martin Vechev\\
	Department of Computer Science\\
	ETH Zurich, Switzerland\\
	\texttt{mihorvat@ethz.ch}, \texttt{\{mark.mueller,marc.fischer,martin.vechev\}@inf.ethz.ch}
}
\begin{document}

\maketitle

\begin{abstract}

Randomized Smoothing (RS) is considered the state-of-the-art approach to obtain certifiably robust models for challenging tasks.
However, current RS approaches drastically decrease standard accuracy on unperturbed data, severely limiting their real-world utility.
To address this limitation, we propose a compositional architecture, \tool, which certifiably decides on a per-sample basis whether to use a smoothed model yielding predictions with guarantees or a more accurate standard model without guarantees. 
This, in contrast to prior approaches, enables both high standard accuracies \emph{and} significant provable robustness.
On challenging tasks such as \IN, we obtain, e.g., $80.0\%$ natural accuracy and $28.2\%$ certifiable accuracy against $\ell_2$ perturbations with $r=1.0$.
We release our code and models at \url{https://github.com/eth-sri/aces}.

\end{abstract}

\section{Introduction}
\label{sec:introduction}
\vspace{-1.0mm}

Since the discovery of imperceptible input perturbations that can fool machine learning models, called adversarial examples \citep{BiggioCMNSLGR13,szegedy2013intriguing}, certifying model robustness has been identified as an essential task to enable their application in safety-critical domains.

Various works have discussed the fundamental trade-off between robustness and accuracy in the empirical setting \citep{Raghunathan19AdvCanHurt,TsiprasSETM19,zhang2019theoretically}. However, in the setting of deterministically certified robustness, this Pareto frontier has only recently been explored \citep{mueller2021certify}.
There, due to the poor scaling of deterministic methods to large networks, performance on more challenging tasks is severely limited.
In the probabilistic certification setting, recent works aim to jointly increase robustness and accuracy by choosing smoothing parameters per sample \citep{Alfarra20DataDependent}, however often at the cost of statistical soundness \citep{Sukenik21Intriguing}.

In this work, we build on ideas from \citet{mueller2021certify} to construct compositional architectures for probabilistic certification and propose corresponding statistically sound and efficient inference and certification procedures based on randomized smoothing \citep{CohenRK19}.
More concretely, we propose to use a smoothed selection-mechanism that adaptively chooses on a per-sample basis between a robustified smoothed classifier and a non-robust but highly accurate classifier.
We show that the synergy of RS with the proposed compositional architecture allows us to obtain significant robustness at almost no cost in terms of natural accuracy even on challenging datasets such as \IN while fully exposing this robustness-accuracy trade-off, even after training.

\textbf{Main Contributions} Our key contributions are:
\begin{itemize}[labelindent=1.9em,labelsep=0.25cm,leftmargin=*]
	\vspace{-2.5mm}
    \item{We are first to extend compositional architectures to the probabilistic certification setting, combining an arbitrary deep model with a smoothed classifier and selection-mechanism.}
    \item{We investigate two selection-mechanisms for choosing, at inference time and on a per-sample basis, between a robust and an accurate classifier and derive corresponding statistically sound prediction and certification algorithms.} %
	\item{We conduct an extensive empirical investigation of our compositional architectures on \IN and \cifar and find that they achieve significantly more attractive trade-offs between robustness and accuracy than any current method. On \IN, we, e.g., achieve $15.8\%$ more natural accuracy at the same ACR or $0.14$ more ACR at the same natural accuracy.}
\end{itemize}

\section{Background \& Related Work}
\label{sec:background}
\vspace{-1mm}
In this section, we review related work and relevant background.
\vspace{-1.5mm}
\paragraph{Adversarial Robustness \& Threat Model} 
Let $\vf \colon \R^d \mapsto \R^{\nclass}$ be a classifier computing an $\nclass$-dimensional logit vector, assigning a numerical score to each of the $\nclass$ classes, given a $d$-dimensional input.
Additionally, let $F(\vx) := \argmax_i f(\vx)_i$ with $F: \R^d \mapsto [1, \dots, \nclass]$ be the function that outputs the class with the largest score.
On a given input $\vx$ with label $y$, we say $F$ is (accurately) adversarially robust if it classifies all inputs in a $p$-norm ball $B_\delta^p(\vx)$ of radius $\delta$ around the sample $\vx$ correctly: $ F(\vx) = F(\vx') = y, \forall \vx' \in B_\delta^p(\vx)$.
We distinguish between empirical and certified robustness. Empirical robustness is computed by trying to find a counterexample $\vx' \in B_\delta^p(\vx)$ such that $F(\vx') \neq F(\vx)$; it constitutes an upper bound to the true robust accuracy. Certified robustness, in contrast, constitutes a sound lower bound. We further distinguish probabilistic and deterministic certification:
Deterministic methods compute the reachable set for given input specifications \citep{katz2017reluplex, GehrMDTCV18, RaghunathanSL18a,  ZhangWCHD18, singh2019abstract} to then reason about the output.
While providing state-of-the-art guarantees for $\ell_{\infty}$ specifications, these methods are computationally expensive and typically limited to small networks. %
Probabilistic methods \citep{LiCWC19, LecuyerAG0J19, CohenRK19} construct a robustified classifier and obtain probabilistic robustness guarantees by introducing noise into the classification process, allowing the certification of much larger models.
In this work, we focus on probabilistic certification and an $\ell_2$-norm based threat model. Extensions to other threat models are orthogonal to our approach.

\vspace{-1mm}
\paragraph{Randomized Smoothing}
Randomized Smoothing (RS) \citep{CohenRK19} is one of the most popular probabilistic certification methods. The key idea is to generate many randomly perturbed instances of the same sample and to then conduct majority voting over the predictions on these perturbed samples.
More concretely, Randomized Smoothing constructs the smoothed classifier $\bar{F} \colon \R^d \mapsto [1, \dots, \nclass]$ by conducting majority voting over a random noise term $\epsilon \sim \bc{N}(0, \sigma_{\epsilon}^2 \mI)$:
\begin{equation}
  \label{eq:g}
  \bar{F}(\vx) := \argmax_c \mathbb{E}_{\epsilon \sim \bc{N}
        (0, \sigma_{\epsilon}^2 \mI)}(F(\vx + \epsilon) = c).
\end{equation}
For this smoothed classifier $\bar{F}$, we obtain the following robustness guarantee:
\begin{restatable}{theorem}{rs}\label{thm:original} \textnormal{(\citet{CohenRK19})}\textbf{.}
    Let $c_A \in [1, \dots, \nclass]$, $\epsilon \sim \bc{N}(0, \sigma_{\epsilon}^2 \mI)$, and $\underline{p_A}, \overline{p_B} \in [0,1]$. If
    \begin{equation}
	\label{eq:smooth}
        \prob_{\epsilon}(F(\vx + \epsilon) = c_A)
        \geq
        \underline{p_A}
        \geq
        \overline{p_B}
        \geq
        \max_{c \neq c_A}\prob_{\epsilon}(F(\vx + \epsilon) = c),
    \end{equation}
    \vspace{-0.5mm}
    then $\bar{F}(\vx + \delta) = c_A$ for all $\delta$ satisfying $\|\delta\|_2 < R$
    with $R := \tfrac{\sigma_{\epsilon}}{2}(\Phi^{-1}(\underline{p_A}) - \Phi^{-1}(\overline{p_B}))$.
\vspace{-0.5mm}
\end{restatable}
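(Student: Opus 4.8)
The plan is to show that the class achieving the $\argmax$ defining $\bar{F}(\vx+\delta)$ remains $c_A$ whenever $\|\delta\|_2 < R$; equivalently, to prove that $\prob(F(\vx+\delta+\epsilon)=c_A) > \prob(F(\vx+\delta+\epsilon)=c)$ for every competing class $c \neq c_A$. The two relevant noise distributions are $X \sim \bc{N}(\vx,\sigma_{\epsilon}^2\mI)$ (noise centered at the clean input) and $Y \sim \bc{N}(\vx+\delta,\sigma_{\epsilon}^2\mI)$ (noise centered at the shifted input). The entire difficulty is to transfer the worst-case guarantee \eqref{eq:smooth}, stated at $X$, into a usable bound at $Y$, using nothing about $F$ beyond the assumed probabilities.

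The key tool is the Neyman--Pearson lemma specialized to two isotropic Gaussians of equal covariance. The density ratio $d\bc{N}(\vx+\delta,\sigma_{\epsilon}^2\mI)/d\bc{N}(\vx,\sigma_{\epsilon}^2\mI)$ is, up to constants, $\exp\!\big(\langle\delta,z\rangle/\sigma_{\epsilon}^2\big)$, a strictly monotone function of the linear statistic $\langle\delta,z\rangle$. Consequently, among all measurable decision regions of fixed $X$-mass, the region minimizing the $Y$-mass is a half-space $\{z:\langle\delta,z\rangle\le t\}$ and the region maximizing it is the complementary half-space. First I would apply this to lower-bound the $c_A$-probability: over all regions $A$ with $\prob(X\in A)\ge \underline{p_A}$, the minimizing half-space yields $\prob(Y\in A)\ge \Phi\!\big(\Phi^{-1}(\underline{p_A}) - \|\delta\|_2/\sigma_{\epsilon}\big)$. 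The computation collapses to one dimension: projecting onto $\delta$ makes $\langle\delta,X\rangle$ and $\langle\delta,Y\rangle$ univariate normals with common standard deviation $\sigma_{\epsilon}\|\delta\|_2$ and means differing by $\|\delta\|_2^2$, after which the bound follows directly from the definition of $\Phi^{-1}$.

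Symmetrically, I would upper-bound each competing class: from $\prob(X\in A)\le\overline{p_B}$ the mass-maximizing half-space gives $\prob(Y\in A)\le\Phi\!\big(\Phi^{-1}(\overline{p_B}) + \|\delta\|_2/\sigma_{\epsilon}\big)$. It then suffices to establish
\[
  \Phi\!\left(\Phi^{-1}(\underline{p_A}) - \tfrac{\|\delta\|_2}{\sigma_{\epsilon}}\right)
  >
  \Phi\!\left(\Phi^{-1}(\overline{p_B}) + \tfrac{\|\delta\|_2}{\sigma_{\epsilon}}\right),
\]
and, since $\Phi$ is strictly increasing, this is equivalent to $\Phi^{-1}(\underline{p_A}) - \Phi^{-1}(\overline{p_B}) > 2\|\delta\|_2/\sigma_{\epsilon}$, i.e. $\|\delta\|_2 < \tfrac{\sigma_{\epsilon}}{2}(\Phi^{-1}(\underline{p_A})-\Phi^{-1}(\overline{p_B})) = R$, which is precisely the hypothesis.

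The main obstacle is the Neyman--Pearson step. One must argue rigorously that a half-space is optimal among all measurable regions, not merely among half-spaces, and must accommodate the fact that $F$ is an arbitrary (possibly randomized) classifier rather than a fixed half-space indicator. This is where the likelihood-ratio monotonicity together with a standard mass-exchange argument does the real work: any region beating the half-space could be improved by swapping mass toward the side where the ratio is more favorable, contradicting optimality. Once half-space optimality is secured, the remaining steps are the routine Gaussian projection computations and the monotonicity manipulation sketched above.
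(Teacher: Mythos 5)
Your proposal is correct and follows essentially the same route as the paper, which offers no proof of its own for this theorem and instead defers to \citet{CohenRK19}; your sketch is precisely that original argument. The Neyman--Pearson step (half-space optimality from the monotone Gaussian likelihood ratio), the two projected one-dimensional bounds $\Phi\bigl(\Phi^{-1}(\underline{p_A}) - \|\delta\|_2/\sigma_{\epsilon}\bigr)$ and $\Phi\bigl(\Phi^{-1}(\overline{p_B}) + \|\delta\|_2/\sigma_{\epsilon}\bigr)$, and the final monotonicity comparison recovering $R = \tfrac{\sigma_{\epsilon}}{2}\bigl(\Phi^{-1}(\underline{p_A}) - \Phi^{-1}(\overline{p_B})\bigr)$ are all exactly as in the cited proof.
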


Where $\Phi^{-1}$ is the inverse Gaussian CDF. The expectation and probabilities in \cref{eq:g,eq:smooth}, respectively, are computationally intractable. Hence, \citet{CohenRK19} propose to bound them using Monte Carlo sampling and the Clopper-Pearson lemma \citep{clopper34confidence}.
We denote obtaining a class $c_A$ and radius $R$ fulfilling \cref{thm:original} as \emph{certification} and just obtaining the class as \emph{prediction}. In practice, both are computed with confidence $1-\alpha$. When this fails, we abstain from making a classification, denoted as \abstain.
Performance is typically measured in certified accuracy at radius $r$ ($R \geq r$) and average certified radius over samples (ACR). We focus on their trade-off with natural accuracy (NAC) and provide detailed algorithms and descriptions in \cref{sec:appendix-rs}.

\vspace{-1mm}
\paragraph{Trade-Off}
For both empirical and certified methods, it has been shown that there is a trade-off between model accuracy and robustness \citep{zhang2019theoretically,XieTGWYL20,Raghunathan19AdvCanHurt,TsiprasSETM19}.
In the case of RS, the parameter $\sigma_\epsilon$ provides a natural way to trade-off certificate strength and natural accuracy \citep{CohenRK19, Mohapatra21HiddenCost}.%

\vspace{-1mm}
\paragraph{Compositional Architectures For Deterministic Certification (\ace)}
To enable efficient robustness-accuracy trade-offs for deterministic certification, \citet{mueller2021certify} introduced a compositional architecture.
The main idea of their \ace architecture is to use a selection model to certifiably predict certification-difficulty, and depending on this, either classify using a model with high certified accuracy,  $F_{\text{Certify}}: \R^d \mapsto [1, \dots, m]$, or a model with high natural accuracy, $F_{\text{Core}}: \R^d \mapsto [1, \dots, m]$.
Overall, the \ace architecture $F_{\ace}: \R^d \mapsto [1, \dots, m]$ is defined as
\begin{equation}
	\label{eq:ace}	
	F_{\ace}(\bm{x}) = F_{\text{Select}}(\bm{x}) \cdot F_{\text{Certify}}(\bm{x}) + (1-F_{\text{Select}}(\bm{x})) \cdot F_{\text{Core}}(\bm{x}).
\end{equation}
\citet{mueller2021certify} propose two instantiations for the selection-mechanism, $F_{\text{Select}}: \R^d \mapsto \{0,1\}$: a learned binary classifier and a mechanism selecting $F_{\text{Certify}}$ if and only if the entropy of its output is below a certain threshold.
In order to obtain a certificate, both $F_\text{Certify}$ and $F_{\text{Select}}$ must be certified.

\section{Robustness vs. Accuracy Trade-Off via Randomized Smoothing}
\label{sec:ace_smoothing}

\vspace{-0.5mm}
Here, we introduce \tool which instantiates \ace (\cref{eq:ace}) with Randomized Smoothing by replacing
$F_{\text{Select}}$  and $F_{\text{Certify}}$ with their smoothed counterparts $\bar{F}_{\text{Select}}$ and $\bar{F}_{\text{Certify}}$, respectively:
\begin{equation}
	\label{eq:aces}	
	F_{\tool}(\bm{x}) = \bar{F}_{\text{Select}}(\bm{x}) \cdot \bar{F}_{\text{Certify}}(\bm{x}) + (1-\bar{F}_{\text{Select}}(\bm{x})) \cdot F_{\text{Core}}(\bm{x}).
\end{equation}
Note that, due to the high cost of certification and inference of smoothed models, instantiating $F_{\text{Core}}$ with significantly larger models than $F_{\text{Certify}}$ and $F_{\text{Select}}$ comes at a negligible computational cost.

\begin{wrapfigure}[17]{r}{0.56\textwidth}
\vspace{-8.0mm}
\scalebox{0.92}{
	\begin{minipage}{1.05\linewidth}
	\input{algorithm-certify-ace}
	\end{minipage}
}
\end{wrapfigure}
\paragraph{Prediction \& Certification}
Just like other smoothed models (\cref{eq:g}), \tool (\cref{eq:aces}) can usually not be evaluated exactly in practice but has to be approximated via sampling and confidence bounds.
We thus propose \textsc{Certify} (shown in \cref{alg:certify-aces}) to soundly compute the output $F_\text{\tool}(\vx)$ and its robustness radius $R$.
Here, \textsc{SampleWNoise}($f, \vx, n, \sigma_{\epsilon}$) evaluates $n$ samples of $f(\vx + \epsilon)$ for $\epsilon \!\! \sim \!\! \bc{N}(0,\sigma_{\epsilon}\mI)$, and \textsc{LowerConfBnd}($m,n,c$) computes a lower bound to the success probability $p$ for obtaining $m$ successes in $n$ Bernoulli trials with confidence $c$.
Conceptually, we apply the \textsc{Certify} procedure introduced in \citet{CohenRK19} twice, once for
$\bar{F}_{\text{Select}}$ and once for $\bar{F}_{\text{Certify}}$.
If $\bar{F}_{\text{Select}}$ certifiably selects the certification model, we evaluate $\bar{F}_{\text{Certify}}$ and return its prediction $\hat{c}_A$ along with the minimum certified robustness radius of $\bar{F}_{\text{Select}}$ and $\bar{F}_{\text{Certify}}$.
If $\bar{F}_{\text{Select}}$ certifiably selects the core model, we directly return its classification $F_{\text{Core}}(\vx)$ and no certificate ($R=0$).
If $\bar{F}_{\text{Select}}$ does not certifiably select either model, we either return the class that the core and certification model agree on or abstain ($\abstain$).
A robustness radius $R$ obtained this way holds with confidence $1-\alpha$ (\cref{thm:aces_cert} in \cref{sec:appendix-prediction}). Note that individual tests need to be conducted with  $1-\tfrac{\alpha}{2}$ to account for multiple testing \citep{bonferroni1936teoria}.
Please see \cref{sec:appendix-prediction} for a further discussion and \textsc{Predict}, an algorithm computing $F_\text{\tool}(\vx)$ but not $R$ at a lower computational cost.

\vspace{-1.2mm}

\paragraph{Selection Model}
We can apply RS to any binary classifier $F_{\text{Select}}$ to obtain a smoothed selection model $\bar{F}_{\text{Select}}$.
Like \citet{mueller2021certify}, we consider two selection-mechanisms: i) a separate selection-network framing selection as binary classification and ii) a mechanism based on the entropy of the certification-network's logits $\vf_{\text{Certify}}(\vx)$ defined as $F_{\text{Select}}(\vx,\theta) := \mathds{1}_{\mathcal{H}(\softmax(\vf_{\text{Certify}}(\vx))) \leq \theta}$ where $\theta \in \mathbb{R}$ denotes the selection threshold.
While a separate selection-network performs much better in the deterministic setting \citep{mueller2021certify}, we find that in our setting the entropy-based mechanism is even more effective (see \cref{sec:appendix-selection-model}).
Thus, we focus our evaluation on an entropy-based selection-mechanism. %
Using such a selection-mechanism allows us to evaluate \tool for a large range of $\theta$, thus computing the full Pareto frontier (shown in \cref{fig:ace_smoothadv}), without reevaluating $\bar{F}_{\text{Certify}}$ and $F_{\text{Core}}$. This makes the evaluation of \tool highly computationally efficient.
We can even evaluate all component models separately and compute \tool certificates for arbitrary combinations retrospectively, allowing quick evaluations of new component models. 
\vspace{-0.5mm}

\section{Experimental Evaluation}
\label{sec:experimental-evaluation}
\vspace{-0.5mm}

\begin{wrapfigure}[9]{r}{0.39\textwidth}
	\centering
	\vspace{-12mm}
	\includegraphics[width=0.9\linewidth]{./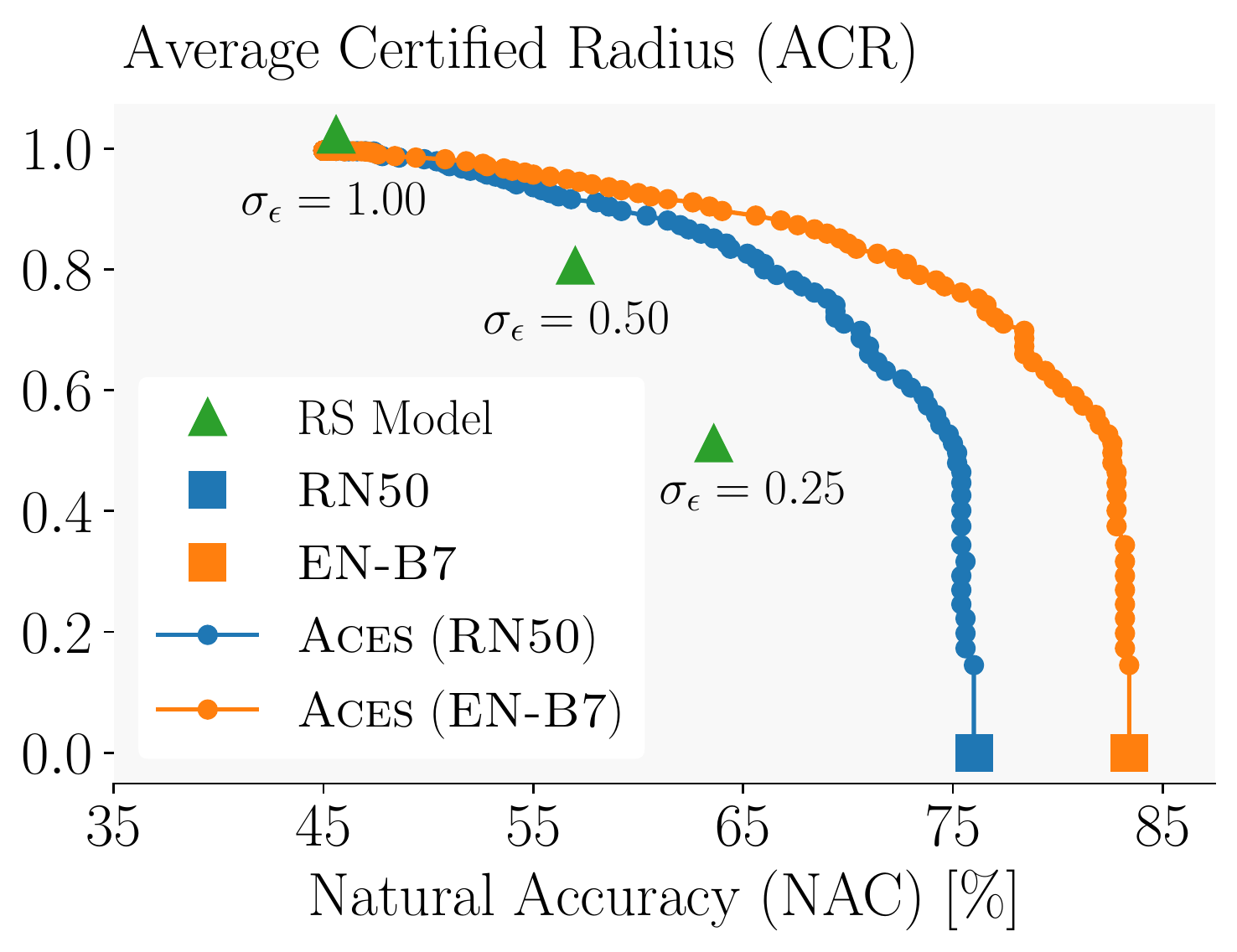}
	\vspace{-3mm}
	\caption{ACR over NAC on \IN.}
	\label{fig:ace_smoothadv}
	\vspace{-1.5mm}
\end{wrapfigure}

In this section, we evaluate \tool on the \IN and \cifar datasets and demonstrate that it yields much higher average certified radii (ACR) and certified accuracies at a wide range of natural accuracies (NAC) than current state-of-the-art methods.
Please see \cref{sec:appendix-experimental-details} for a detailed description of the experimental setup and \cref{sec:appendix-additional-experiments} for significantly extended results, including different training methods and noise levels $\sigma$, showing that the effects discussed here are consistent across a wide range of settings.

\begin{table}[t]
	\centering
	\small
	\centering
	\caption{Comparison of natural accuracy (NAC), average certified radius (ACR), and certified accuracy and selection rate at various radii on \IN with $\sigma_{\epsilon}=0.5$. We use a \consistency trained \RNM as certification-network and an \EN as core-network.}
	\label{tab:IN_main_paper}
	\vspace{-2mm}
	\scalebox{0.75}{
		\begin{threeparttable}
			\begin{tabular}{ccccccccccccccccc}
				\toprule
				\multirow{2.6}{*}{$\theta$} &\multirow{2.6}{*}{NAC} & \multirow{2.6}{*}{ACR} & \multicolumn{7}{c}{Certified Accuracy at Radius r}&  \multicolumn{7}{c}{Certified Selection Rate at Radius r}\\
				\cmidrule(lr){4-10} 				\cmidrule(lr){11-17}
				& & & 0.00 & 0.25 & 0.50 & 0.75 & 1.00 & 1.25 & 1.50 & 0.00 & 0.25 & 0.50 & 0.75 & 1.00 & 1.25 & 1.50\\
				\midrule
				0.0 & 83.4 & 0.000 & 83.4 & 0.0 & 0.0 & 0.0 & 0.0 & 0.0 & 0.0 & 0.0 & 0.0 & 0.0 & 0.0 & 0.0 & 0.0 & 0.0\\
				0.1 & 80.0 & 0.530 & 80.0 & 33.6 & 32.6 & 30.2 & 28.2 & 25.6 & 23.0 & 45.0 & 40.2 & 37.2 & 34.0 & 31.8 & 28.2 & 25.0\\
				0.2 & 75.4 & 0.682 & 75.0 & 43.6 & 41.2 & 38.2 & 35.8 & 33.4 & 30.0 & 63.8 & 58.6 & 55.6 & 50.6 & 47.8 & 45.2 & 40.8\\
				0.3 & 68.8 & 0.744 & 68.2 & 48.4 & 44.4 & 41.6 & 39.2 & 35.6 & 32.8 & 78.0 & 74.2 & 70.2 & 66.2 & 62.8 & 59.0 & 55.0 \\
				0.6 & 57.2 & 0.799 & 55.4 & 51.6 & 48.8 & 45.0 & 42.0 & 39.0 & 34.6 & 99.8 & 99.4 & 99.0 & 98.2 & 97.4 & 96.6 & 94.6\\
				1.0 & 57.2 & 0.800 & 55.4 & 51.6 & 48.8 & 45.2 & 42.2 & 39.0 & 34.6 & 100.0 & 100.0 & 100.0 & 100.0 & 100.0 & 100.0 & 100.0\\ 
				\bottomrule
			\end{tabular}
		\end{threeparttable}
	}
	\vspace{-4mm}
\end{table}

\paragraph{\tool on \IN}
\cref{fig:ace_smoothadv} compares the average certified radius (ACR) over natural accuracy (NAC) obtained on \IN by individual \RNM (green triangles) with those obtained by \tool (dots).
We use \RNM with $\sigma_{\epsilon}=1.0$ as certification-networks and either another \RNM (blue) or an \EN (orange) as the core-network (squares) for \tool. There, the horizontal gap between the individual RS models (triangles) and \tool (orange line) corresponds to the increase in natural accuracy at the same robustness, e.g., $15.8\%$ for $\sigma_{\epsilon}=0.5$.
We further observe that \tool already dominates the ACR of the individual models, especially at high natural accuracies, when using the small \RNM as core-network and even more so with the stronger \EN.%

\cref{tab:IN_main_paper} shows how the certified accuracy and selection rate (ratio of samples sent to the certification-network) change with the selection threshold $\theta$.
Increasing $\theta$ from $0.0$ to $0.1$ only reduces natural accuracy by $3.4\%$ while increasing ACR from $0.0$ to $0.530$ and certified accuracy at $r=1.0$ from $0.0\%$ to $28.2\%$.
Similarly, reducing $\theta$ from $1.0$ to $0.3$ loses very little ACR ($0.056$) and certified accuracy ($3.0\%$ at $r=1.0$) but yields a significant gain in natural accuracy ($11.6\%$).

\paragraph{\tool on \cifar}
\begin{wrapfigure}[13]{r}{0.463\textwidth}
	\centering
	\vspace{-5mm}
	\includegraphics[width=0.97\linewidth]{./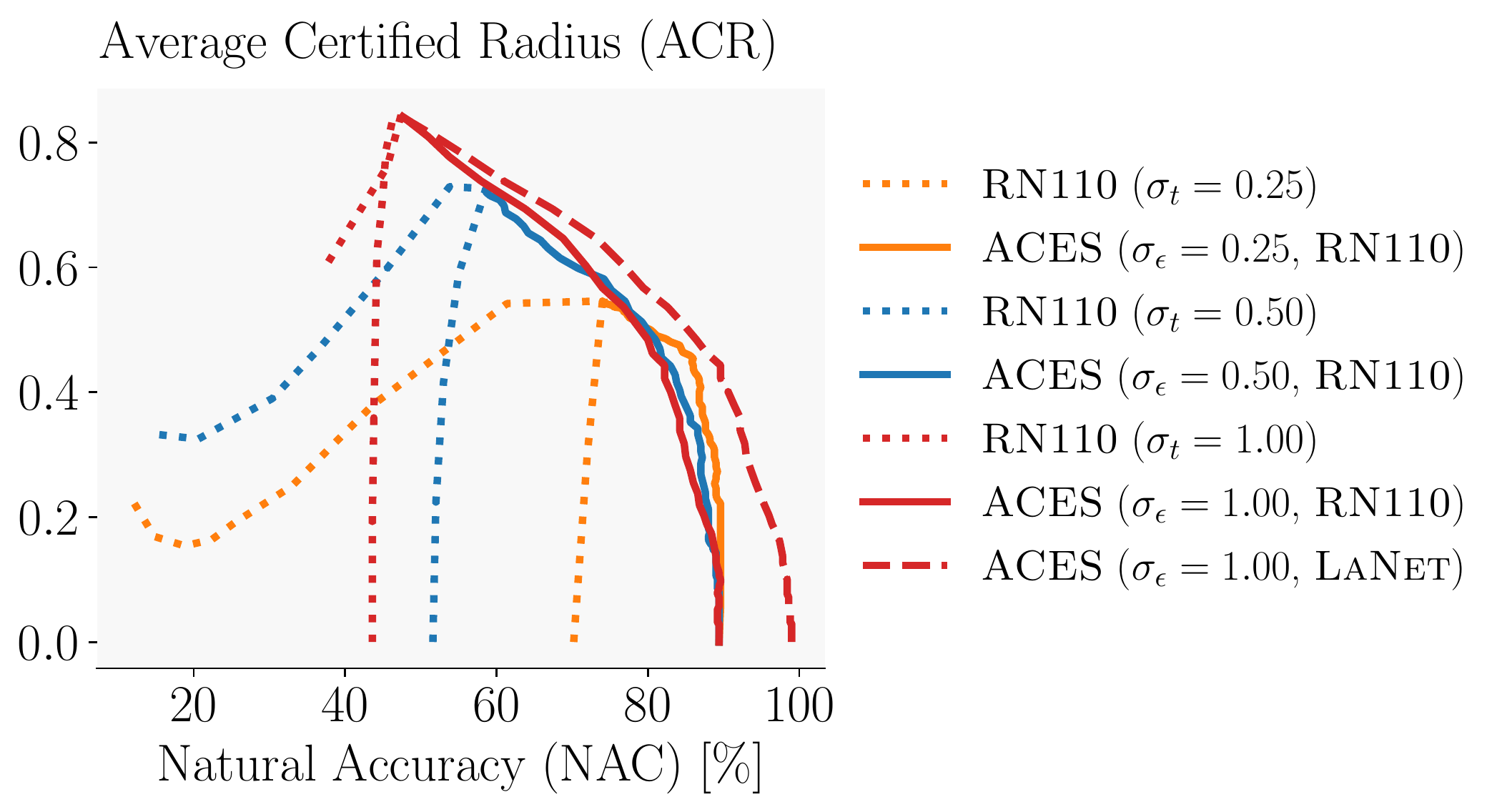}
	\vspace{-2mm}
	\caption{Comparison of ACR over natural accuracy of \tool with different noises $\sigma_{\epsilon}$ and selection thresholds $\theta$ (solid \& dashed lines), and individual \RNB evaluated with $\sigma_e \in [0.0, 1.5]$ and trained at $\sigma_t \in \{0.25, 0.5, 1.0\}$.}
	\label{fig:ace_smoothadv_cifar}
	\vspace{-1.5mm}
\end{wrapfigure}
\cref{fig:ace_smoothadv_cifar} compares \tool (solid \& dashed lines) against a baseline of varying the inference noise levels $\sigma_{\epsilon}$ (dotted lines) with respect to the robustness accuracy trade-offs obtained on \cifar.
Using only \RNB, \tool models (solid lines) dominate all individual models across training noise levels $\sigma_t \in \{0.25, 0.5, 1.0\}$ (orange, blue, red).
Individual models only reach comparable performance when evaluated at their training noise level.
However, covering the full Pareto frontier this way would require training a very large number of networks to match a single \tool model. %
Using a more precise \LANET as core-network for \tool (red dashed line) significantly widens this gap.

\paragraph{Selection-Mechanism}
\begin{wrapfigure}[8]{r}{0.31\textwidth}
	\centering
	\vspace{-5mm}
	\includegraphics[width=0.95\linewidth]{./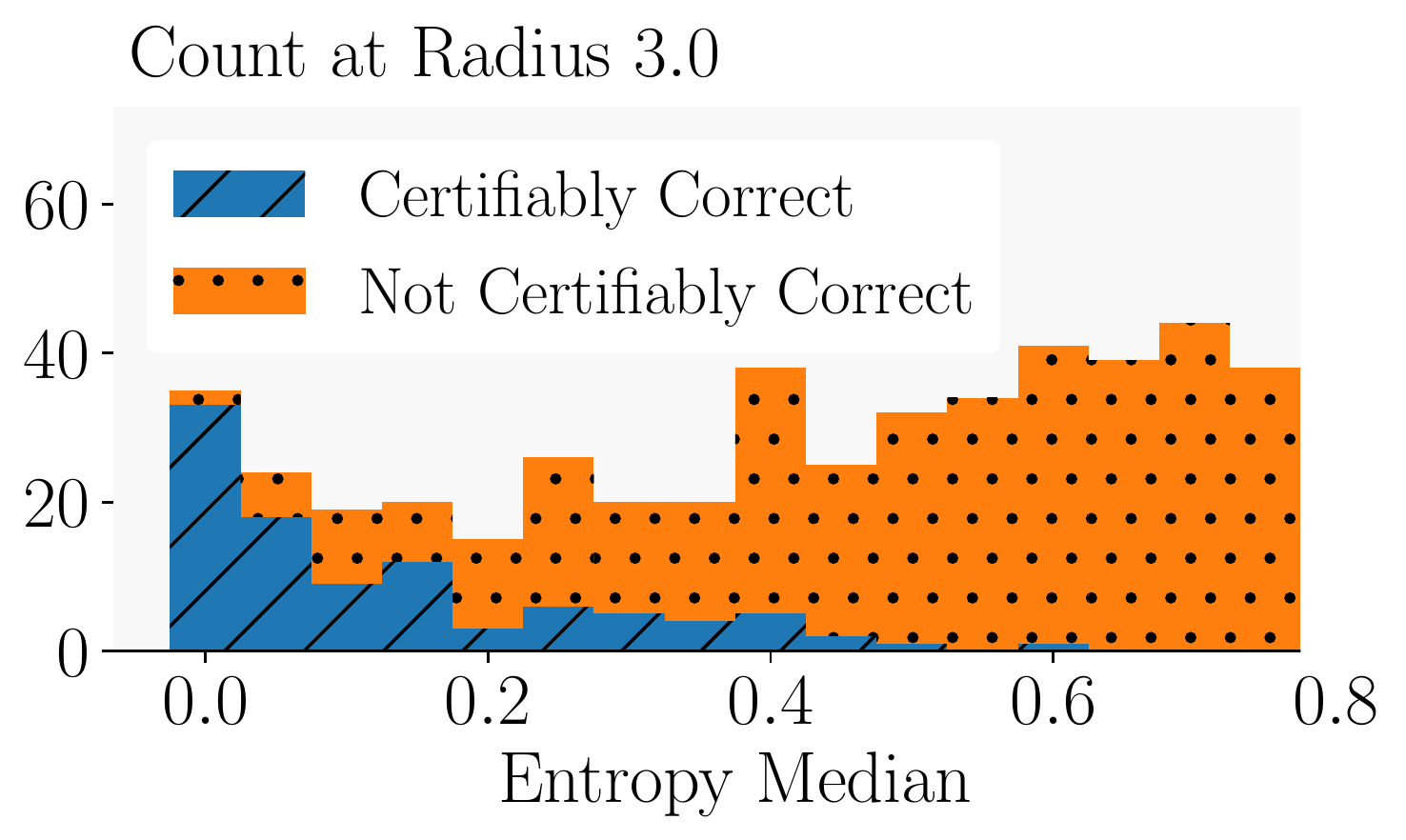}
	\vspace{-4mm}
	\caption{Certifiable correctness over median entropy.}
	\label{fig:entropies_robust}
\end{wrapfigure}
In \cref{fig:entropies_robust}, we visualize the distribution of samples that can (blue) and can not (orange) be certified correctly (at $r=3.0$) over the certification-network's median entropy (over perturbations).
Samples to the left of a chosen threshold are assigned to the certification-network and the rest to the core-network.
While separation is not perfect, we observe that there is a quick decline in the portion of certifiable samples as entropy increases, indicating that the selection-mechanism works well.

\section{Conclusion}
We extend compositional architectures to probabilistic robustness certification, achieving, for the first time, both high certifiable \emph{and} natural accuracies on the challenging \IN dataset. The key component of our \tool architecture is a certified, entropy-based selection-mechanism, choosing, on a per-sample basis, whether to use a smoothed model yielding guarantees or a more accurate standard model for inference. Our experiments show that \tool yields trade-offs between robustness and accuracy that are beyond the reach of current state-of-the-art approaches while being fully orthogonal to other improvements of Randomized Smoothing.

\message{^^JLASTBODYPAGE \thepage^^J}

\clearpage
\bibliography{references}
\bibliographystyle{plainnat}

\message{^^JLASTREFERENCESPAGE \thepage^^J}

\ifbool{includeappendix}{%
	\clearpage
	\appendix

\section{Randomized Smoothing}
\label{sec:appendix-rs}

\begin{figure}[h]
	\vspace{-4mm}
	\input{algorithm-certify}
	\vspace{-6mm}
	\input{algorithm-predict}
	\vspace{-4mm}
\end{figure}

In this section, we briefly explain the practical certification and inference algorithms \textsc{Certify} and \textsc{Predict}, respectively,  for a smoothed classifier
\begin{equation*}
\bar{F}(\vx) := \argmax_c \mathbb{E}_{\epsilon \sim \bc{N}
	(0, \sigma_{\epsilon}^2 \mI)}(F(\vx + \epsilon) = c)
\end{equation*}
as introduced by \cite{CohenRK19}. We first define some components of \cref{alg:predict,alg:certify-rs} below before we discuss them in more detail:

$\textsc{SampleWNoise}(F, x, n, \sigma_{\epsilon})$ first samples $n$ inputs $x_1, \dots, x_n$ as $x_i = x + \epsilon_i$ for $\epsilon_i \sim \mathcal{N}(0, \sigma_{\epsilon})$.
Then it counts how often $F$ predicts which class for these $x_1, \dots, x_n$ and returns the corresponding $\nclass$ dimensional array of counts.

$\textsc{LowerConfBnd}(k, n, 1 - \alpha)$ returns a lower bound on the unknown probability $p$ with confidence at least $1 - \alpha$ such that $k \sim \mathcal{B}(n, p)$ for the binomial distribution with parameters $n$ and $p$.

$\textsc{BinomPValue}(n_A,n,p)$ returns the probability of at least $n_A$ success in $n$ Bernoulli trials with success probability $p$.

\paragraph{Certification}
We first recall the robustness guarantee for a smoothed classifier (\cref{thm:original}):

\rs*

Unfortunately, computing the exact probabilities ${\prob_\epsilon(F(\vx + \epsilon)=c)}$ is generally intractable.
Thus, to allow practical application, \citet{CohenRK19} propose \textsc{Certify} (\cref{alg:certify-rs}) utilizing Monte Carlo sampling and confidence bounds:
First, we draw $n_0$ samples to determine the majority class $\hat{c}_A$. Then, we draw another $n$ samples to compute a lower bound $\underline{p_A}$ to the success probability, i.e., the probability of the underlying model to predict $\hat{c}_A$ for a perturbed sample, with confidence $1-\alpha$ via the Clopper-Pearson lemma \citep{clopper34confidence}.
If $\underline{p_A} > 0.5$, we set $\overline{p_{B}} = 1 - \underline{p_A}$ and obtain radius $R = \sigma_{\epsilon} \Phi^{-1}(\underline{p_A})$ via \cref{thm:original} with confidence $1-\alpha$, else we abstain (return \abstain). See \citet{CohenRK19} for a proof.

\paragraph{Prediction}
Computing a confidence bound to the success probability with \textsc{Certify} is computationally expensive as the number of samples $n$ is typically large. If we are only interested in computing the class predicted by the smoothed model, we can use the computationally much cheaper \textsc{Predicts} (\cref{alg:predict}) proposed by \citet{CohenRK19}.
Instead of sampling in two separate rounds, we only draw $n$ samples once and compute the two most frequently predicted classes $\hat{c}_A$ and $\hat{c}_B$ with frequencies $n_A$ and $n_B$, respectively. Subsequently, we test if the probability of obtaining $n_A$ success in $n_A+n_B$ fair Bernoulli trials is smaller than $\alpha$, and if so, have with confidence $1 - \alpha$ that the true prediction of the smoothed model is in fact $\hat{c}_A$. See \citet{CohenRK19} for a proof.

\paragraph{Training for Randomized Smoothing}
To obtain high certified radii via \textsc{Certify}, the base model $F$ has to be trained specifically to cope with the added noise terms $\epsilon$. To achieve this, several training methods have been introduced, which we quickly outline below.

\citet{CohenRK19} propose to use data augmentation with Gaussian noise during training. We refer to this as \cohen.
\citet{salman2019provably} suggest \smoothadv, combining adversarial training \citep{madry2017towards, KurakinGB17,rony2019decoupling} with data augmentation ideas from \cohen. %
While effective in improving accuracy, this training procedure comes with a very high computational cost.
\citet{zhai2020macer} propose \macer as a computationally cheaper alternative with a similar performance by adding a surrogate of the certification radius to the loss and thus more directly optimizing for large radii.
\citet{jeong2020consistency} build on this approach by replacing this term with a more easily optimizable one and proposing what we refer to as \consistency.

\section{Prediction \& Certification for \tool}
\label{sec:appendix-prediction}

\setcounter{algorithm}{0}
	\setlength{\itemindent}{0.0em}
	\algrenewcommand\algorithmicindent{0.5em}
	  \begin{algorithm}[H]
		  \caption{Certification for \tool}
		  \label{alg:certify-aces}
		  \begin{algorithmic}
			  \Function{Certify}{$\sigma_{\epsilon}, \vx, n_0, n, \alpha$}
			  \State $\texttt{counts}_S^0 \leftarrow \textsc{SampleWNoise}(F_{\text{Select}}, \vx, n_0, \sigma_{\epsilon})$

			  \State $\texttt{counts}_C^0 \leftarrow \textsc{SampleWNoise}(F_{\text{Certify}}, \vx, n_0, \sigma_{\epsilon})$
			  \State $\hat{s} \leftarrow 0 \text{\textbf{ if }} \texttt{counts}_S^0[0]>\texttt{counts}_S^0[1]  \text{\textbf{ else }} 1$
			  \State $\hat{c}_A \leftarrow$ top indices in $\texttt{counts}_C^0$
			  \State $\texttt{counts}_S \leftarrow \textsc{SampleWNoise}(F_{\text{Select}}, \vx, n, \sigma_{\epsilon})$
 			  \State $\texttt{counts}_C \leftarrow \textsc{SampleWNoise}(F_{\text{Certify}}, \vx, n, \sigma_{\epsilon})$
			  \State $\underline{p_S} \leftarrow \textsc{LowerConfBnd}$($\texttt{counts}_S[\hat{s}],n,1 - \tfrac{\alpha}{2}$)
			  \State $\underline{p_A} \leftarrow \textsc{LowerConfBnd}$($\texttt{counts}_C[\hat{c}_A],n,1 - \tfrac{\alpha}{2}$)
			  \State $\underline{p} \leftarrow \min(\underline{p_A}, \underline{p_S})$

			  \State \textbf{if} {$\hat{s} = 1 \land \underline{p} > \frac{1}{2}$} \textbf{return} $\hat{c}_A$ and $R:=\sigma_{\epsilon} \Phi^{-1}(\underline{p})$
			  \State \textbf{else if} $\hat{s} = 0 \land \underline{p_S} \geq \frac{1}{2}$ \textbf{return} $F_\text{Core}(\vx)$ and $R:=0$
			  \State \textbf{else if} {$\hat{c}_A = F_\text{Core}(\vx) \land \underline{p_A} \geq \frac{1}{2}$} \textbf{return} $\hat{c}_A$ and $R:=0$
			  \State \textbf{else} \textbf{return} \abstain and $R:=0$
			  \EndFunction
	  \end{algorithmic}
  \end{algorithm}

\setcounter{algorithm}{3}

In this section, we recall the certification approach (\cref{alg:certify-aces}) and introduce the prediction approach (\cref{alg:predict-ace}, below) in detail for \tool as discussed in \cref{sec:ace_smoothing}.

\paragraph{Certification}
For an arbitrary but fixed $\vx$ we let $c := F_\text{\tool}(\vx)$ denote the true output of \tool (\cref{eq:aces}) under exact evaluation of the expectations over perturbations (\cref{eq:g}) and let
\begin{equation*}
	R := \begin{cases}
		\min(R_\text{Select}, R_\text{Certify}) & \text{if } \bar{F}_{\text{Select}}(\vx) = 1\\
		0 & \text{otherwise}
	\end{cases},
\end{equation*}
where $R_\text{Select}, R_\text{Certify}$ denote the robustness radius according to \cref{thm:original}
for $\bar{F}_{\text{Select}}(\vx)$ and $\bar{F}_{\text{Certify}}(\vx)$, respectively. We now obtain the following guarantees for the outputs of our certification algorithm \textsc{Certify}:

\begin{theorem}\label{thm:aces_cert}
	Let $\hat{c}, \hat{R}$ denote the class and robustness radius returned by \textsc{Certify} (\cref{alg:certify-aces}) for input $\vx$.
	Then, this output $\hat{c}$, computed via sampling, is the true output $F_{\text{\tool}}(\vx + \delta) =: c = \hat{c} \quad \forall \delta \text{ with } \|\delta\|_2 \leq \hat{R}$ with confidence at least $1-\alpha$, if $\hat{c} \neq \abstain$.
\end{theorem}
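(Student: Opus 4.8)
The plan is to reduce the claim to two independent instances of the single-classifier guarantee of \citet{CohenRK19} (\cref{thm:original}), one applied to $\bar{F}_{\text{Select}}$ and one to $\bar{F}_{\text{Certify}}$, combined through a union bound and a case analysis over the three non-abstaining return branches of \cref{alg:certify-aces}. The only genuinely probabilistic step is the union bound: because each \textsc{LowerConfBnd} call uses confidence $1-\tfrac{\alpha}{2}$, the Clopper--Pearson lemma makes the event $E_S := \{\underline{p_S} \leq \prob_\epsilon(F_{\text{Select}}(\vx+\epsilon)=\hat{s})\}$ hold with probability at least $1-\tfrac{\alpha}{2}$, and similarly for $E_A := \{\underline{p_A} \leq \prob_\epsilon(F_{\text{Certify}}(\vx+\epsilon)=\hat{c}_A)\}$ (the selection of $\hat{s},\hat{c}_A$ from the independent $n_0$ samples makes these bounds valid exactly as in the single-model analysis). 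Hence $E_S \cap E_A$ holds with probability at least $1-\alpha$, and I would carry out all remaining reasoning deterministically conditioned on $E_S \cap E_A$, showing that on this event every non-abstaining branch returns the true $F_{\text{\tool}}$ output within the claimed radius.

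For the first branch, where $\hat{s}=1$ and $\underline{p} = \min(\underline{p_A},\underline{p_S}) > \tfrac12$, both component lower bounds exceed $\tfrac12$, so under $E_S \cap E_A$ we get $\bar{F}_{\text{Select}}(\vx)=1$ and $\bar{F}_{\text{Certify}}(\vx)=\hat{c}_A$. I would then apply \cref{thm:original} to each classifier with $\overline{p_B}=1-\underline{p_S}$ and $\overline{p_B}=1-\underline{p_A}$; using the Gaussian symmetry $\Phi^{-1}(1-p)=-\Phi^{-1}(p)$, the two radii collapse to $R_{\text{Select}}=\sigma_\epsilon\Phi^{-1}(\underline{p_S})$ and $R_{\text{Certify}}=\sigma_\epsilon\Phi^{-1}(\underline{p_A})$. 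Since $\Phi^{-1}$ is increasing, $\hat{R}=\sigma_\epsilon\Phi^{-1}(\underline{p}) = \min(R_{\text{Select}},R_{\text{Certify}})$, so for every $\delta$ with $\|\delta\|_2 \leq \hat{R}$ both smoothed labels are preserved; substituting $\bar{F}_{\text{Select}}(\vx+\delta)=1$ into \cref{eq:aces} collapses $F_{\text{\tool}}(\vx+\delta)$ to $\bar{F}_{\text{Certify}}(\vx+\delta)=\hat{c}_A=\hat{c}$.

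The two remaining branches both certify only $\hat{R}=0$, so it suffices to check correctness at $\delta=0$. In the second branch ($\hat{s}=0$, $\underline{p_S}\geq\tfrac12$), $E_S$ gives $\prob_\epsilon(F_{\text{Select}}(\vx+\epsilon)=0)\geq\tfrac12$, hence $\bar{F}_{\text{Select}}(\vx)=0$ and \cref{eq:aces} yields $F_{\text{\tool}}(\vx)=F_{\text{Core}}(\vx)=\hat{c}$. The third branch is the informative one: selection is \emph{not} certified, but $\underline{p_A}\geq\tfrac12$ gives $\bar{F}_{\text{Certify}}(\vx)=\hat{c}_A$ and the guard enforces $\hat{c}_A=F_{\text{Core}}(\vx)$; then whichever value $\bar{F}_{\text{Select}}(\vx)\in\{0,1\}$ takes, \cref{eq:aces} evaluates to either $\bar{F}_{\text{Certify}}(\vx)=\hat{c}_A$ or $F_{\text{Core}}(\vx)=\hat{c}_A$, so the output is correct regardless of the uncertified selection.

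The main obstacle is the bookkeeping rather than any single branch: I would have to argue carefully that taking $\min$ of the two component radii is precisely what simultaneously certifies both $\bar{F}_{\text{Select}}$ and $\bar{F}_{\text{Certify}}$, and reconcile the strict inequality $\|\delta\|_2 < R$ in \cref{thm:original} with the closed-ball claim $\|\delta\|_2 \leq \hat{R}$ here (absorbing the measure-zero boundary sphere, or invoking the closed form of the guarantee). A secondary point I would state explicitly is the tie-breaking convention rendering $\bar{F}_{\text{Select}}$ well defined when a class probability equals $\tfrac12$, which is what licenses the non-strict $\geq\tfrac12$ thresholds in the two $\hat{R}=0$ branches.
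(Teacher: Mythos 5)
Your proposal is correct and follows essentially the same route as the paper's own proof: a union bound over the two Clopper--Pearson events for $\underline{p_S}$ and $\underline{p_A}$ (each at confidence $1-\tfrac{\alpha}{2}$), followed by a per-branch case analysis in which the first branch combines the two radii from \cref{thm:original} via the minimum and the two $\hat{R}=0$ branches are handled deterministically. Your write-up is in fact somewhat more careful than the paper's, which does not spell out the $\Phi^{-1}(1-p)=-\Phi^{-1}(p)$ collapse of the radius, the strict-versus-closed-ball boundary issue, or the tie-breaking convention at probability $\tfrac12$.
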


\begin{proof}
	First, we note that, as \textsc{Certify} (\cref{alg:certify-rs}) in \citet{CohenRK19}, our \textsc{Certify} determines $\underline{p_A}$ and $\underline{p_S}$ with probability $1-\frac{\alpha}{2}$. Thus allowing us to upper bound $\overline{p_B} := 1-\underline{p_A}$ and giving us $\hat{R}_\text{Certify}$ via \cref{thm:original} and similarly $\hat{R}_\text{Select}$.

	Thus, if $\bar{F}_{\text{Select}}(\vx)$ returns $1$ (selecting the certification network) with confidence $1-\frac{\alpha}{2}$ and $\bar{F}_{\text{Certify}}(\vx)$ returns class $c$ with confidence $1-\frac{\alpha}{2}$, then we have via union bound with confidence $1-\alpha$ that $F_{\text{\tool}}(\vx)$ returns $\hat{c} = c$. 
	Further, the probabilities $\underline{p_A}$ and $\underline{p_S}$ induce the robustness radii $\hat{R}_\text{Select}$ and $\hat{R}_\text{Certify}$, respectively, via \cref{thm:original}. Thus we obtain the robustness radius $\hat{R} = \min(\hat{R}_\text{Select}, \hat{R}_\text{Certify})$ as their minimum.

	Should $\bar{F}_{\text{Select}}(\vx) = 0$ (selecting the core network), with probability $1-\frac{\alpha}{2}$ we return the deterministically computed $F_\text{Core} = \hat{c} = c$, trivially with confidence $1-\frac{\alpha}{2} \geq 1- \alpha$.
	As we only only claim robustness with $\hat{R} = 0$ in this case, the robustness statement is trivially fulfilled.

	In case we can not compute the decision of $\bar{F}_{\text{Select}}(\vx)$ with sufficient confidence, but $\bar{F}_{\text{Certify}}(\vx)$ and $F_{\text{Core}}(\vx)$ agree with high confidence, we return the consensus class. We again have trivially from the deterministic $F_{\text{Core}}$ and the prediction of $\bar{F}_{\text{Certify}}$ with confidence  $1-\frac{\alpha}{2}$ an overall confidence of $1-\frac{\alpha}{2} \geq 1- \alpha$ that indeed $\hat{c} = c$. Finally, in this case we again only claim $\hat{R} = 0$ which is trivially fulfilled.
\end{proof}

	  \begin{algorithm}[H]
		  \caption{Prediction for \tool}
		  \label{alg:predict-ace}
		  \begin{algorithmic}
			  \Function{Predict}{$\sigma_{\epsilon}, \vx, n, \alpha$}
			  \State $\texttt{counts}_S \leftarrow \textsc{SampleWNoise}(F_{\text{Select}}, \vx, n, \sigma_{\epsilon}$
			  \State $\texttt{counts}_C \leftarrow \textsc{SampleWNoise}(F_{\text{Certify}}, \vx, n, \sigma_{\epsilon})$
	  		  \State $n_0, n_1 \leftarrow \texttt{counts}_S[0], \texttt{counts}_S[1]$
			  \State $\hat{c}_A, \hat{c}_B \leftarrow$ top two indices in $\texttt{counts}_C$
  			  \State $n_A, n_B \leftarrow \texttt{counts}_C[\hat{c}_A], \texttt{counts}_C[\hat{c}_B]$
			  \State $\rho_A \leftarrow \textsc{BinomPValue}(n_A, n_A + n_B, 0.5)$
			  \State \textbf{if} {$ n_1 > n_0 \wedge \text{BinomPValue}(n_1, n, 0.5) \leq \tfrac{\alpha}{2} \land \rho_A \leq \tfrac{\alpha}{2}$} \textbf{return} $\hat{c}_A$
			  \State \textbf{else if } {$ n_0 > n_1 \lor \text{BinomPValue}(n_0, n, 0.5) \leq \tfrac{\alpha}{2}$} \textbf{return} $F_{\text{Core}}(\vx)$
			  \State \textbf{else if} {$\hat{c}_A = F_{\text{Core}}(\vx) \land \rho_A \leq \tfrac{\alpha}{2} $} \textbf{return} $\hat{c}_A$
			  \State \textbf{else} \textbf{return} \abstain
			  \EndFunction
	  \end{algorithmic}
  \end{algorithm}

\paragraph{Prediction}
Let us again consider the setting where for an arbitrary but fixed $\vx$ we $c := F_\text{\tool}(\vx)$ denotes the true output of \tool (\cref{eq:aces}) under exact evaluation of the expectations over perturbations (\cref{eq:g}). However, now we are only interested in the predicted class $\hat{c}$ and not the robustness radius. We thus introduce \textsc{Predict} (\cref{alg:predict-ace}), which is computationally much cheaper than \textsc{Certify} and for which we obtain the following guarantee:

\begin{theorem}
	Let $\hat{c}$ be the class returned by \textsc{Predict} (\cref{alg:predict-ace}) for input $\vx$. Then, this output computed via sampling is the true output $F_{\text{\tool}}(\vx) =: c = \hat{c}$ with confidence at least $1-\alpha$, if $\hat{c} \neq \abstain$ does not abstain.
\end{theorem}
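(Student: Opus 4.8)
The plan is to mirror the structure of the proof of \cref{thm:aces_cert}, performing a case analysis over the three branches of \textsc{Predict} (\cref{alg:predict-ace}) that emit a class rather than \abstain, and showing each returns exactly $c = F_{\text{\tool}}(\vx)$ with confidence at least $1-\alpha$. Two facts do the heavy lifting. First, the soundness of the binomial test underlying \cref{alg:predict} from \citet{CohenRK19}: whenever $\textsc{BinomPValue}(k, N, 0.5) \leq \alpha'$ for the larger of two class counts, the associated class is the true smoothed prediction with confidence $1-\alpha'$. Second, $F_{\text{Core}}$ is evaluated deterministically, so its output contributes no sampling uncertainty; all randomness enters only through the sampled counts $\texttt{counts}_S$ and $\texttt{counts}_C$.

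First I would treat the branch returning $\hat c_A$ under $n_1 > n_0 \land \textsc{BinomPValue}(n_1,n,0.5)\le\tfrac\alpha2 \land \rho_A\le\tfrac\alpha2$. Here $n_1 > n_0$ with $\textsc{BinomPValue}(n_1,n,0.5)\le\tfrac\alpha2$ certifies, via the test above applied to the binary selector, that $\bar{F}_{\text{Select}}(\vx) = 1$ with confidence $1-\tfrac\alpha2$; separately, $\rho_A\le\tfrac\alpha2$ certifies $\bar{F}_{\text{Certify}}(\vx)=\hat c_A$ with confidence $1-\tfrac\alpha2$. Since $\bar{F}_{\text{Select}}(\vx)=1$ forces $F_{\text{\tool}}(\vx)=\bar{F}_{\text{Certify}}(\vx)$ by \cref{eq:aces}, a union bound over the two failure events (each of probability $\le\tfrac\alpha2$) gives $\hat c = \hat c_A = c$ with confidence $1-\alpha$.

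Next, the branch returning $F_{\text{Core}}(\vx)$ is reached once the selector is confidently resolved to $0$ at confidence $1-\tfrac\alpha2$; then \cref{eq:aces} gives $c = F_{\text{Core}}(\vx)$ deterministically, so a single test suffices and $1-\tfrac\alpha2\ge 1-\alpha$. The consensus branch ($\hat c_A = F_{\text{Core}}(\vx)\land \rho_A\le\tfrac\alpha2$) is the interesting one: here the selection is \emph{not} resolved, so I would argue uniformly over both values of $\bar{F}_{\text{Select}}(\vx)$. If the true selection is $0$ then $c = F_{\text{Core}}(\vx) = \hat c_A$ holds deterministically by the checked equality; if it is $1$ then $c = \bar{F}_{\text{Certify}}(\vx) = \hat c_A$ holds with confidence $1-\tfrac\alpha2$ because $\rho_A\le\tfrac\alpha2$. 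In both cases the output equals $\hat c_A$, and the single certification test controls the only error event, so $\hat c = c$ with confidence $1-\tfrac\alpha2\ge 1-\alpha$.

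I expect the main obstacle to be the bookkeeping of the multiple-testing correction rather than any single inequality: I must verify that each branch emitting a class spends its error budget on at most the events it actually needs (two tests at level $\tfrac\alpha2$ in the first branch, a single test in the other two), so the Bonferroni-style union bound never exceeds $\alpha$, and that every remaining case falls through to \abstain, so no class is ever returned without a valid certificate. A secondary subtlety is that, as in \cref{alg:predict}, the vector $\texttt{counts}_C$ is reused both to choose the candidate $\hat c_A$ and to run the test $\rho_A$; I would invoke the soundness of \cref{alg:predict} directly rather than re-deriving this, and note that the union bound across the selector and certifier tests requires no independence, only that each test individually controls its false-positive rate at $\tfrac\alpha2$.
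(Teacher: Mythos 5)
Your proposal is correct and follows exactly the route the paper takes: the paper's own proof is a one-line deferral to the argument for \textsc{Certify} (\cref{thm:aces_cert}) combined with the soundness of the binomial test from \citet{CohenRK19}, and your case analysis over the three non-abstaining branches with a union bound at level $\tfrac{\alpha}{2}$ per test is precisely the fleshed-out version of that analogy. No substantive difference to report.
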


\begin{proof}
	This proof follows analogously to that for \textsc{Certify} (\cref{thm:aces_cert}) from \citet{CohenRK19}.
\end{proof}

\section{Experimental Setup Details}
\label{sec:appendix-experimental-details}

In this section, we discuss experimental details.
We evaluated \tool on the \IN \citep{ImageNet} and the \cifar \citep{cifar} datasets.
For \IN, we combine \RNM \citep{He_2016_CVPR} selection- and certification-networks with \EN core-networks \citep{TanL19}. 
For \cifar, we use \RNB \citep{He_2016_CVPR} selection- and certification-networks, and \LANET \citep{Wang21LaNet} core-networks.
We implement training and inference in PyTorch \citep{PaszkeGMLBCKLGA19} and conduct all of our experiments on single GeForce RTX 2080 Ti.

As core-networks, we use pre-trained \EN\footnote{https://github.com/lukemelas/EfficientNet-PyTorch/tree/master/examples/imagenet} and \LANET \citep{Wang21LaNet} for \IN and \cifar, respectively.
As certification-networks, we use pre-trained \RNM and \RNB from \citet{CohenRK19} (\cohen ), \citet{salman2019provably} (\smoothadv), and \citet{zhai2020macer} (\macer).
Additionally, we train smoothed models with \consistency \citep{jeong2020consistency} using the parameters reported to yield the largest ACR, except on \IN with $\sigma_{\epsilon}=0.25$ where we use $\eta=0.5$ and $\lambda=5$ (there, no parameters were reported).

We follow previous work \citep{CohenRK19,salman2019provably} and evaluate every 20$^{th}$ image of the \cifar test set and every 100$^{th}$ of the \IN test set \citep{CohenRK19,jeong2020consistency}, yielding 500 test samples for each.
For both, we use $n_0=100$ and $n=100'000$ for certification, and $n=10'000$ for prediction (to report natural accuracy).
To obtain an overall confidence of $\alpha=0.001$ via Bonferroni correction \citep{bonferroni1936teoria}, we use $\alpha'=0.0005$ to certify the selection and the certification model.
To compute the entropy, we use the logarithm with basis $\nclass$ (number of classes), %
such that the resulting entropies are always in $[0, 1]$.
Certifying and predicting an \tool model on the 500 test samples we consider takes approximately $23.8$ hours on \IN, and $10.8$ hours on \cifar overall, using one RTX 2080 Ti.
This includes computations for a wide range ($>100$) values for the selection threshold $\theta$.

\section{Additional Experiments}
\label{sec:appendix-additional-experiments}

In this section, we provide a significantly extended evaluation focusing on the following aspects:

In \cref{sec:appendix-additional-imagenet-experiments,sec:appendix-additional-cifar-experiments}, we evaluate \tool for different training methods and a range of noise levels $\sigma$ on \IN and \cifar, respectively.

In \cref{sec:appendix-selection-ablation}, we provide an in-depth analysis of the selection-mechanism, considering different measures of selection performance and both entropy-based selection and a separate selection-network.

In \cref{sec:appendix-baselines}, we discuss the robustness-accuracy trade-offs obtained by varying the noise level $\sigma_{\epsilon}$ used at inference.

\subsection{Additional Results on \IN}
\label{sec:appendix-additional-imagenet-experiments}

\begin{table}[p]
	\centering
	\small
	\centering
	\caption{Natural accuracy (NAC), average certified radius (ACR) and certified accuracy at different radii on \IN with $\sigma_{t}=\sigma_{\epsilon}=0.25$ for a range of threshold parameters $\theta$ and an \tool model with entropy selection, a \RNM certification-network and an \EN core-network.} 
	\label{tab:IN_main_025}
	\scalebox{0.9}{
	\begin{threeparttable}

\end{threeparttable}
}
\vspace{-4mm}
\end{table}

\begin{figure}[t]
	\centering
	\scalebox{0.97}{
	\makebox[\textwidth][c]{
        \begin{subfigure}[t]{0.32\textwidth}
			\centering
			\includegraphics[width=\textwidth]{./figures/main-plots/acr_imagenet_main-eps-converted-to.pdf}
		\end{subfigure}
		\hfill
		\begin{subfigure}[t]{0.32\textwidth}
			\centering
			\includegraphics[width=\textwidth]{./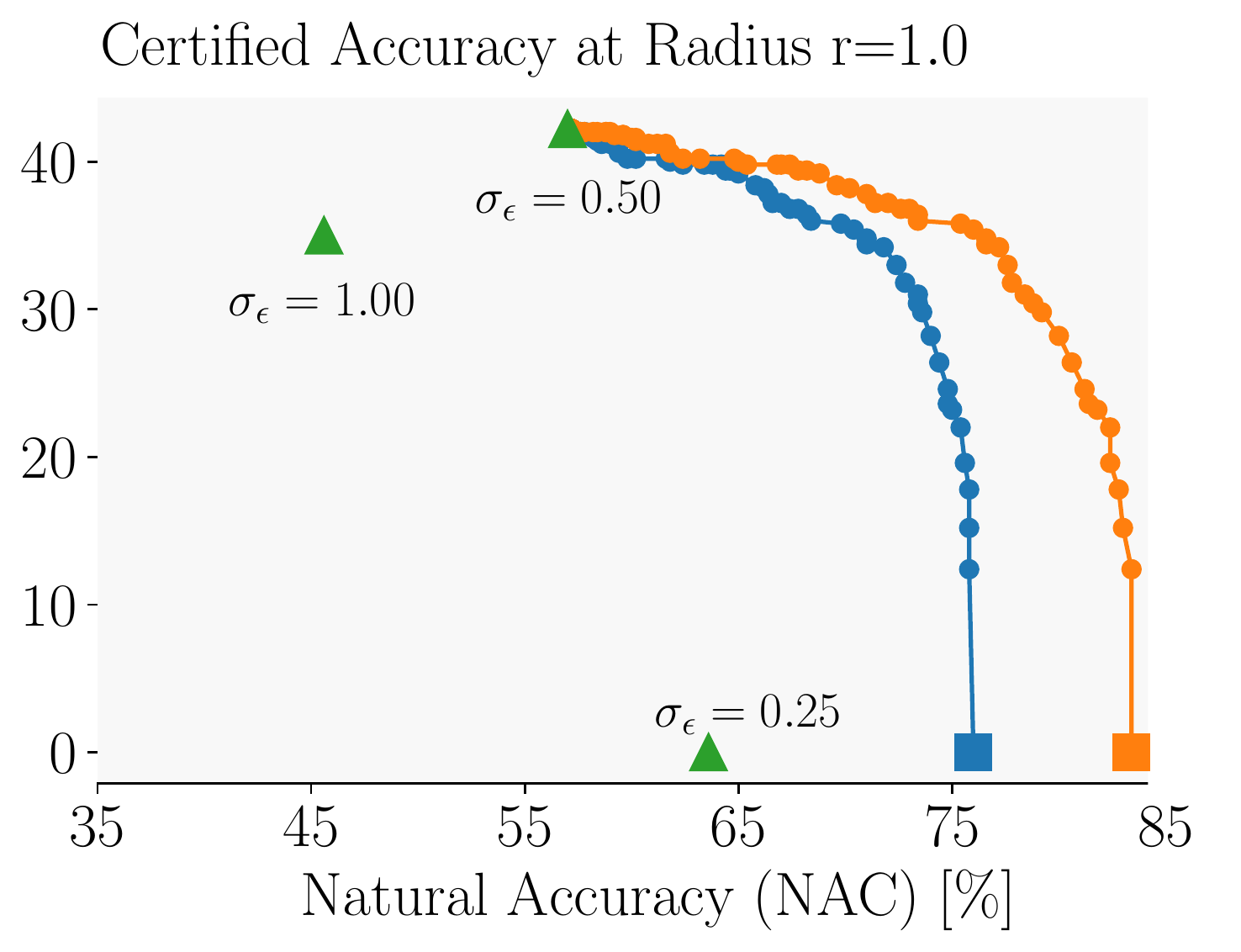}
		\end{subfigure}
		\hfill
		\begin{subfigure}[t]{0.32\textwidth}
			\centering
			\includegraphics[width=\textwidth]{./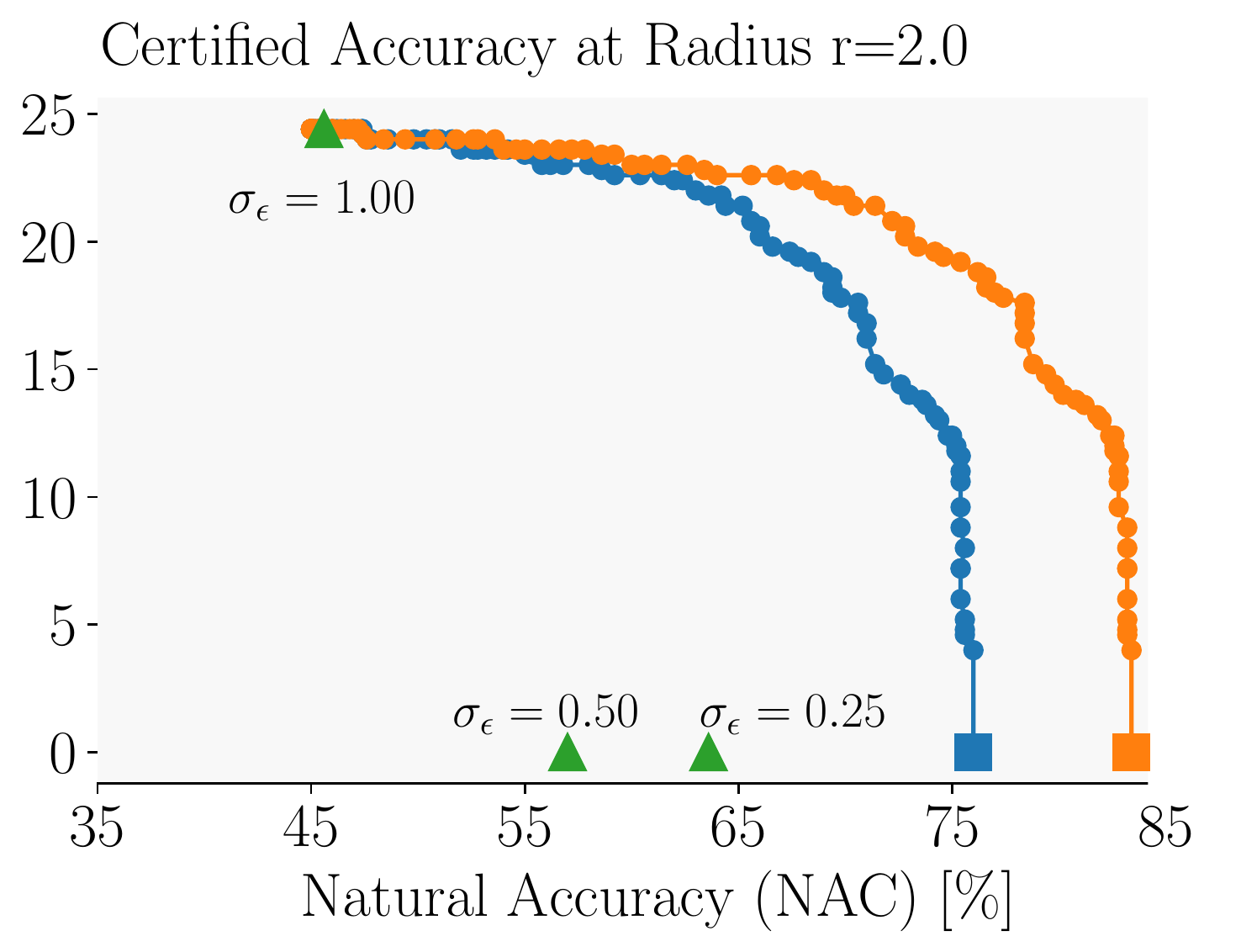}
		\end{subfigure}
	}
	}
	\vspace{-3mm}
	\caption{Comparison of \tool (blue and orange dots) and individual smoothed models (green triangles) on \IN with \consistency trained models with respect to average certified radius (left), certified accuracy at $r=1.0$ (middle), and certified accuracy at $r=2.0$ (right) over natural accuracy. We use \RNM for individual networks and as certification-networks for all \tool models. We consider \tool models with \RNM and \EN core-networks.}
	\label{fig:ace_imagenet_ca_appendix}
\end{figure}

In this section, we evaluate \tool on \IN for a wide range of training methods (\cohen, \smoothadv, and \consistency) and noise levels $\sigma \in \{0.25, 0.50, 1.00\}$.
In particular, we provide detailed results on the certified accuracies obtained by \tool in \cref{tab:IN_main_025} and the corresponding certified selection rates in \cref{tab:IN_selection_025} for $\sigma_{t} = \sigma_{\epsilon} = 0.25$.
Similarly, \cref{tab:IN_main_050,tab:IN_selection_050} and \cref{tab:IN_main_100,tab:IN_selection_100} contain results for $\sigma_{\epsilon} = 0.5$ and $\sigma_{\epsilon}=1.0$, respectively.

In \cref{fig:ace_imagenet_ca_appendix}, we visualize the trade-off between natural and certified accuracy at fixed radii for \tool (blue and orange dots) and individual smoothed models (green triangles).
We observe that \tool achieves significant certified accuracies at natural accuracies not achievable at all by conventional smoothed models.

For example, the highest natural accuracy ($63.6\%$) obtained by one of the \consistency smoothed models requires $\sigma_{\epsilon}=0.25$, leading to a certified accuracy of $0.0\%$ at $l_2$ radius $2.0$. \tool, in contrast, can use a certification-network with $\sigma_{\epsilon}=1.0$ to, e.g., obtain a similar natural accuracy of $66.8\%$ and a much higher certified accuracy of $22.6\%$.

\subsection{Additional Results on \cifar}
\label{sec:appendix-additional-cifar-experiments}

\begin{table}[p]
	\centering
	\small
	\centering
    \caption{Natural accuracy (NAC), average certified radius (ACR) and certified accuracy at different radii on \cifar with $\sigma_{t}=\sigma_{\epsilon}=0.25$ for a range of threshold parameters $\theta$ and an \tool model with entropy selection, a \RNB certification-network and an \LANET core-network.}  
	\label{tab:cifar10_main_025}
	\scalebox{0.9}{
	\begin{threeparttable}

\end{threeparttable}
}
\vspace{-4mm}
\end{table}

\begin{figure}[t]
	\centering
	\scalebox{0.97}{
	\makebox[\textwidth][c]{
        \begin{subfigure}[t]{0.32\textwidth}
			\centering
			\includegraphics[width=\textwidth]{./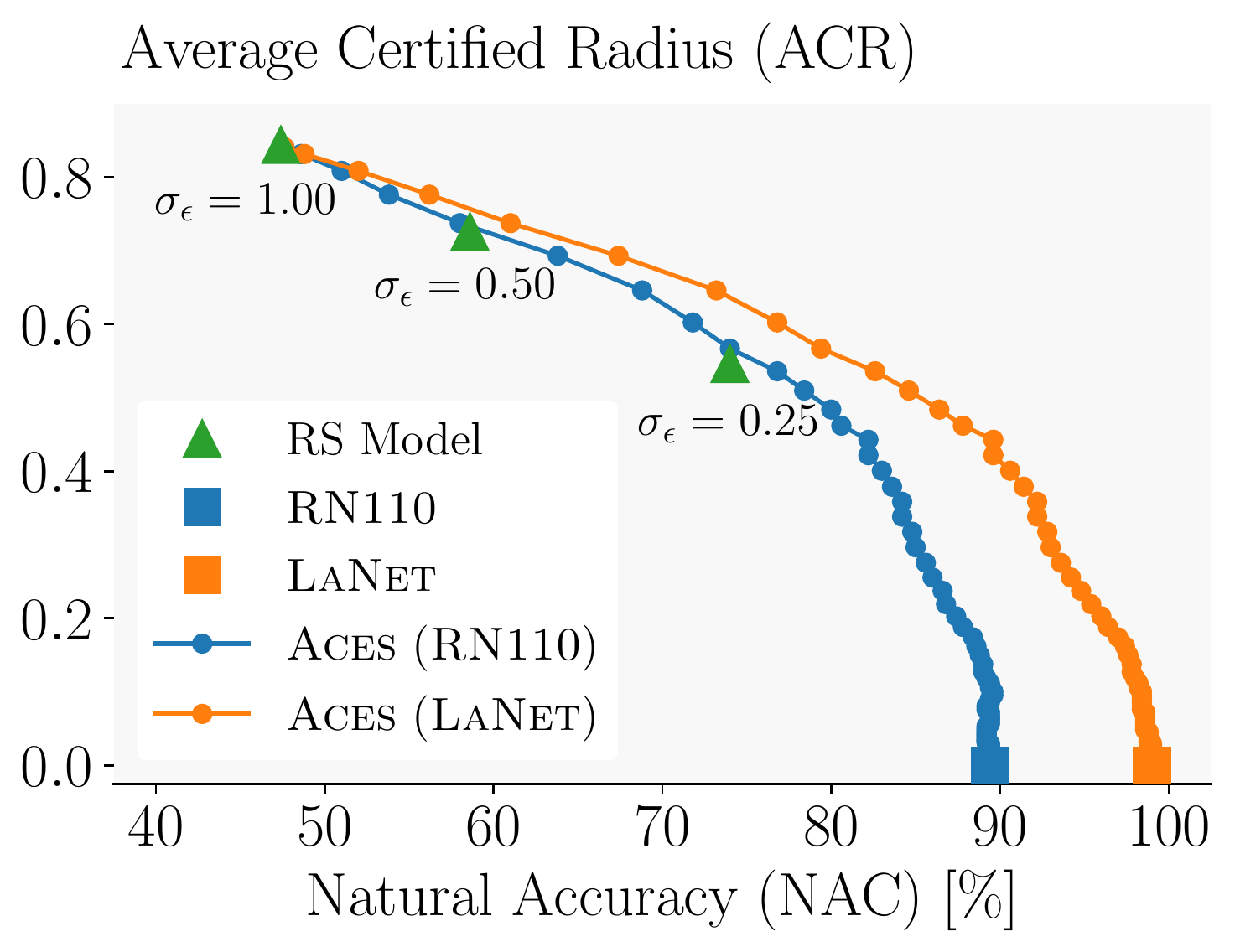}
		\end{subfigure}
		\hfill
		\begin{subfigure}[t]{0.32\textwidth}
			\centering
			\includegraphics[width=\textwidth]{./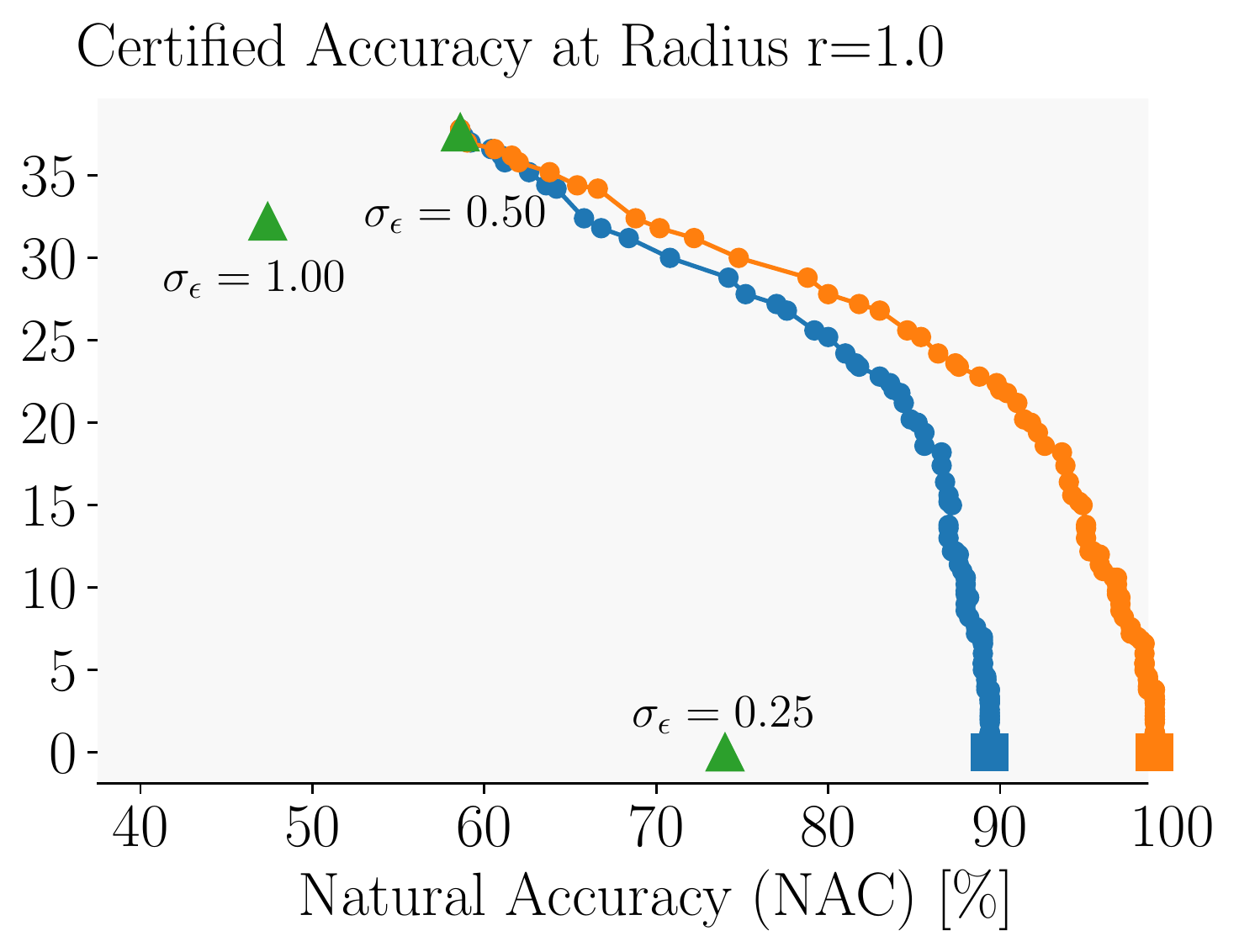}
		\end{subfigure}
		\hfill
		\begin{subfigure}[t]{0.32\textwidth}
			\centering
			\includegraphics[width=\textwidth]{./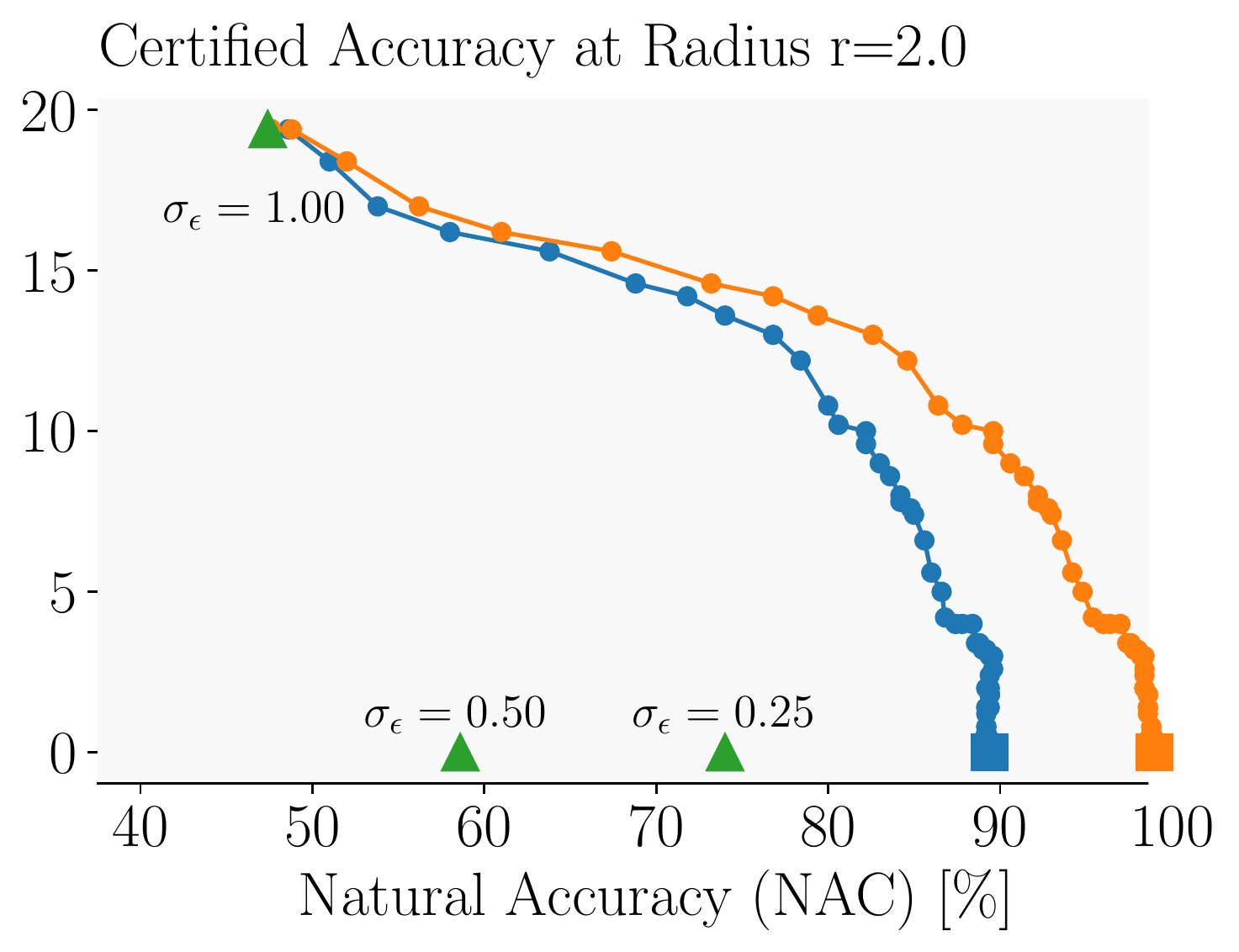}
		\end{subfigure}
	}
	}
	\vspace{-3mm}
	\caption{Comparison of \tool (blue and orange dots) and individual smoothed models (green triangles) on \cifar with \smoothadv trained models with respect to average certified radius (left), certified accuracy at $r=1.0$ (middle), and certified accuracy at $r=2.0$ (right) over natural accuracy. We use \RNB for individual networks and as certification-networks for all \tool models. We consider \tool models with \RNB and \LANET core-networks.}
	\label{fig:ace_cifar_ca_appendix}
\end{figure}

In this section, we evaluate \tool on \cifar for a wide range of training methods (\cohen, \smoothadv, \macer, and \consistency) and noise levels $\sigma \in \{0.25, 0.50, 1.00\}$.
In particular, we provide detailed results on the certified accuracies obtained by \tool in \cref{tab:cifar10_main_025} and the corresponding certified selection rates in \cref{tab:cifar10_selection_025} for $\sigma_{t} = \sigma_{\epsilon} = 0.25$.
Similarly, \cref{tab:cifar10_main_050,tab:cifar10_selection_050} and \cref{tab:cifar10_main_100,tab:cifar10_selection_100} contain results for $\sigma_{\epsilon} = 0.5$ and $\sigma_{\epsilon}=1.0$, respectively.

In \cref{fig:ace_cifar_ca_appendix}, we visualize the trade-off between natural and certified accuracy at fixed radii for \tool (blue and orange dots) and individual smoothed models (green triangles).
We observe that \tool achieves significant certified accuracies at natural accuracies not achievable at all by conventional smoothed models.

\subsection{Selection-Mechanism Ablation}
\label{sec:appendix-selection-ablation}

In this section, we investigate the entropy-based selection-mechanism, introduced in \cref{sec:ace_smoothing}, in more detail and compare it to one based on a separate selection-network.

\subsubsection{Selection Certification}
\label{sec:appendix-selection-certification}

\begin{figure}[t]
	\centering
	\begin{minipage}[t]{.5\textwidth}
	\centering
	\includegraphics[width=0.95\textwidth]{./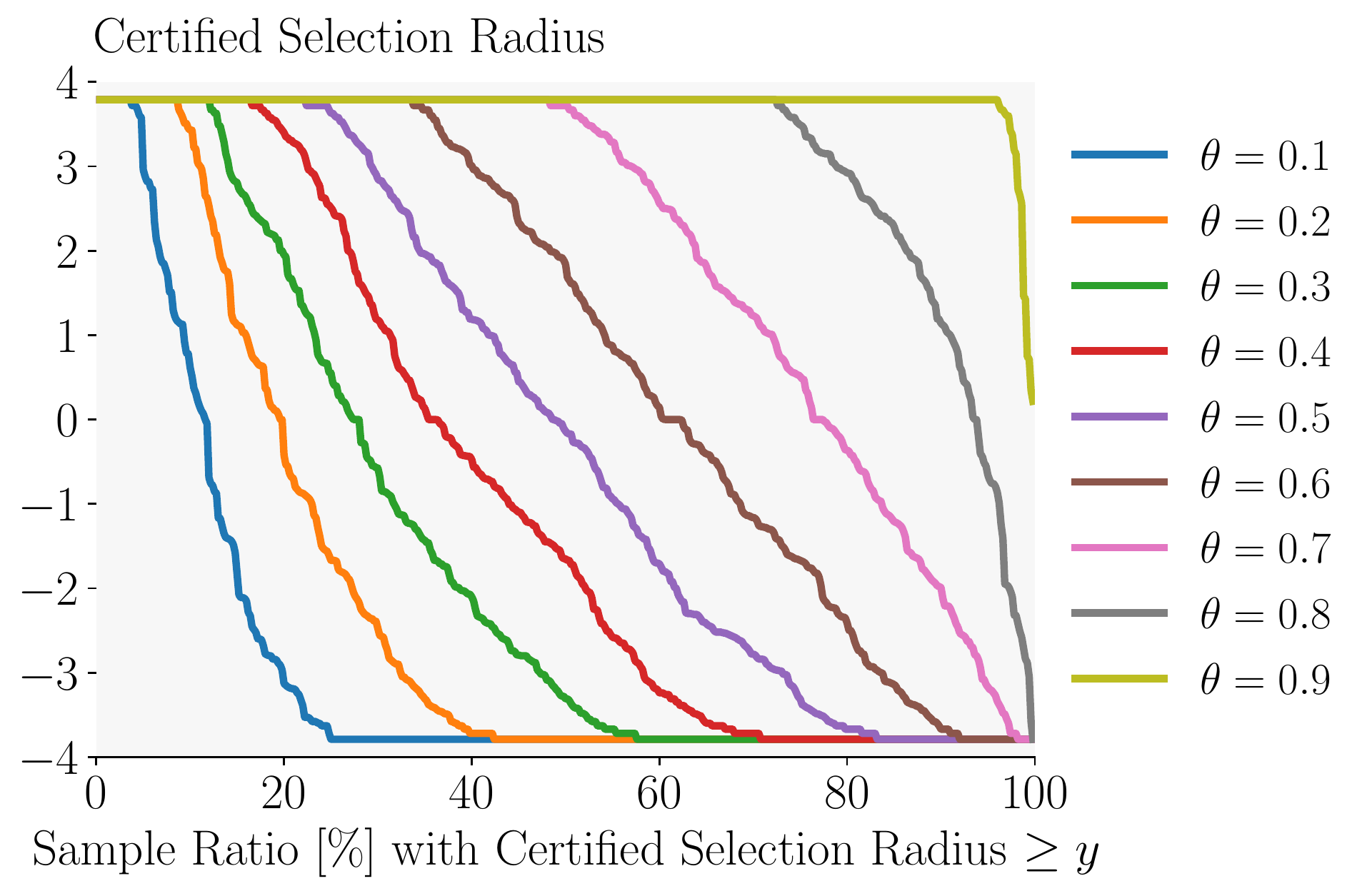} %
	\caption{Certified radii of an entropy-based selection mechanism for a range of $\theta$ over percentile on \IN for a \smoothadv trained \RNM model. Positive radii correspond to the selection of the certification-network and negative radii to that of the core-network. A zero radius corresponds to abstentions \abstain.}
	\label{fig:selection-radii}
\end{minipage}
\hfil
\begin{minipage}[t]{.36\textwidth}
	\centering
	\includegraphics[width=0.9\textwidth]{./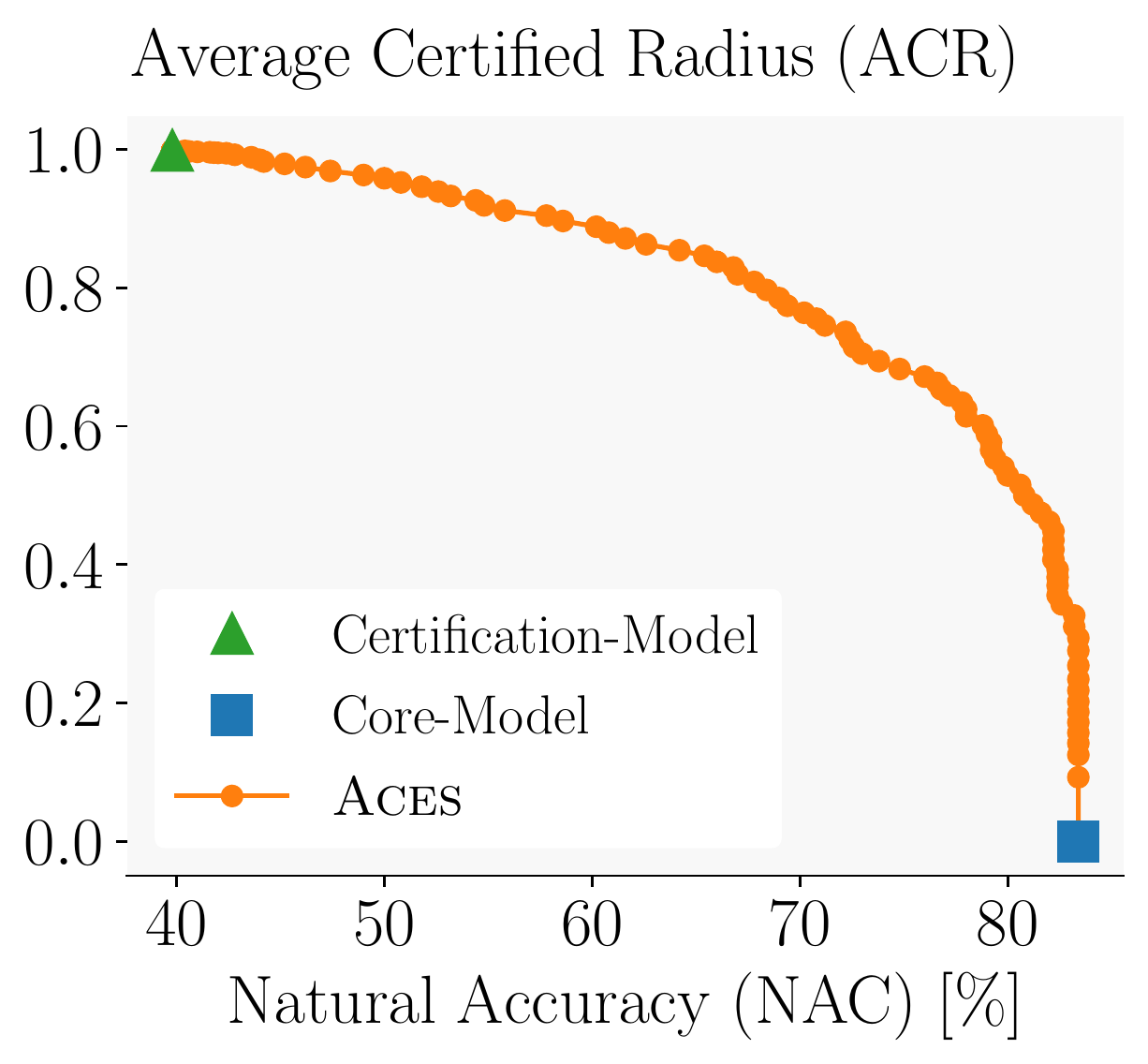} %
	\caption{ACR over natural accuracy for an \tool model based on \smoothadv trained \RNM certification-network, a corresponding entropy-based selection-network and an \EN core-network on \IN.}
	\label{fig:selection-nac-vs-acr}
\end{minipage}
\end{figure}

In \cref{fig:selection-radii}, we visualize the certified radii of the prediction of an entropy-based selection-mechanism based on an \smoothadv trained \RNM with $\sigma = 1.00$ for \IN. 
A positive radius corresponds to a certified selection of the certification-network with that radius, and a negative radius corresponds to a certified selection of the core-network.
A radius of 0 corresponds to the selection-mechanism abstaining.
We generally observe that the selection-mechanism only abstains on very few samples.
Further, for most samples and especially at high or low values of $\theta$, (almost) all perturbations lead to the same selection decision and hence the mathematically maximal certified radius (for a given confidence and sample count).
This is crucial, as the certified radius obtained for \tool is the minimum of those obtained for the certification-network and selection-mechanism.

\subsubsection{Training a Selection Model}
\label{sec:appendix-selection-model}

\begin{figure}[t]
	\centering
	\scalebox{0.97}{
	\makebox[\textwidth][c]{
        \begin{subfigure}[t]{0.38\textwidth}
			\centering
			\includegraphics[width=\textwidth]{./figures/selection-net-plots/selection_net_acr_cifar10}
		\end{subfigure}
		\hfill
		\begin{subfigure}[t]{0.58\textwidth}
			\centering
			\includegraphics[width=\textwidth]{./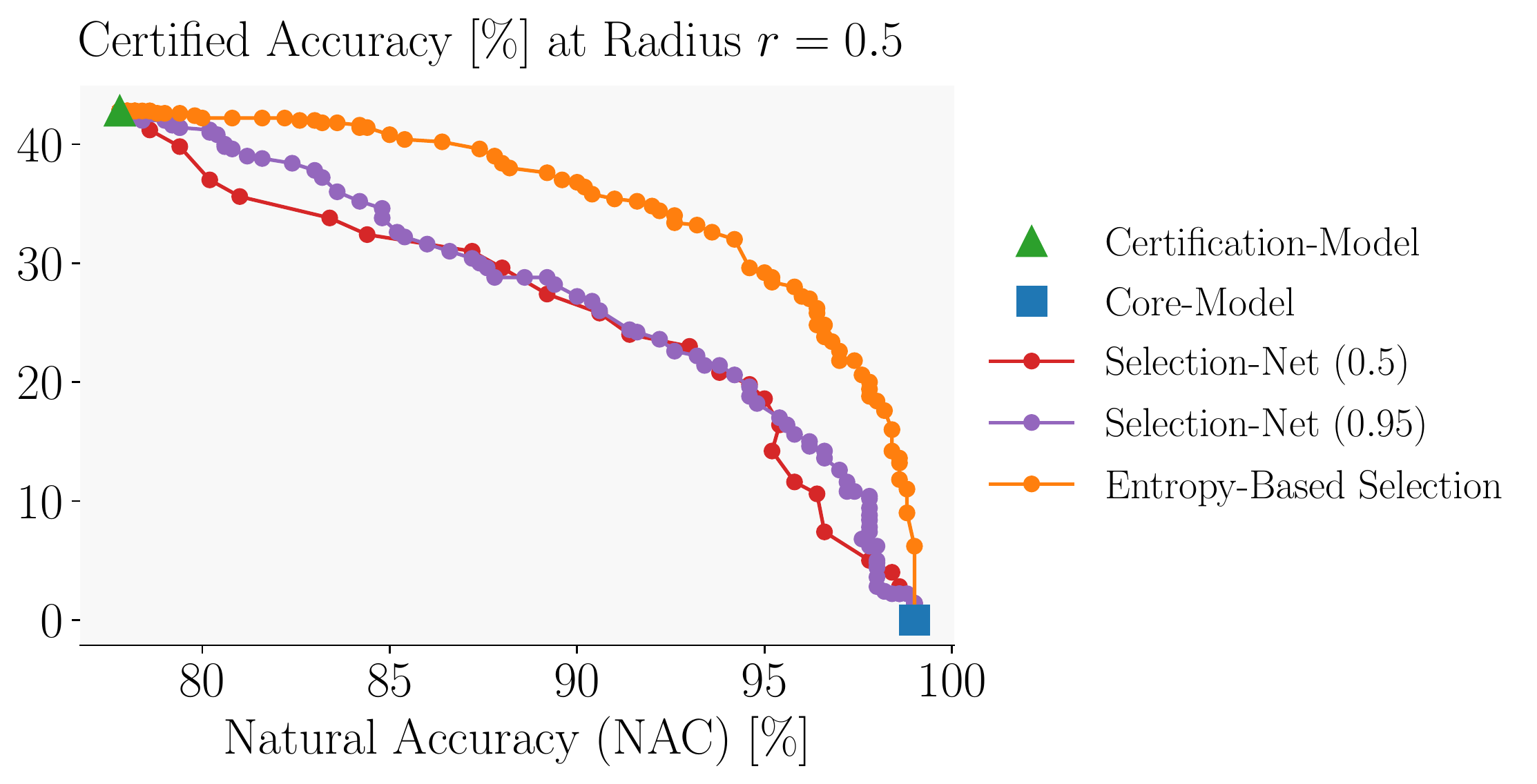}
		\end{subfigure}
    }
	}
	\vspace{-3mm}
	\caption{Comparing \tool with the entropy-based selection mechanism (orange) and selection networks (with $\eta \in \{0.5, 0.95\}$, red and purple) on \cifar with respect to ACR (left) and certified accuracy at radius $r=0.5$ (right) compared to natural accuracy. The certification network (\RNB trained with \cohen, $\sigma=0.25$) and core-network (\LANET) are fixed.}
	\label{fig:selection_net_main_figure}
\end{figure}

\begin{table}[p]
	\centering
	\small
	\centering
    \caption{Comparions of natural accuracy (NAC), average certified radius (ACR) and various certified radii via \tool on \cifar with $\sigma_{\epsilon}=0.25$. We consider selection networks trained with $\eta \in \{0.5, 0.95\}$ for various threshold parameters $\theta$. All selection and certification networks have a \RNB and all core models have a \LANET architecture, and the certification-network was trained with \cohen.}  
	\label{tab:cifar10_selection_net_025}
	\scalebox{0.9}{
	\begin{threeparttable}
    \begin{tabular}{ccccccccccccc}
        \toprule
        \multirow{2.6}{*}{$\eta$} & \multirow{2.6}{*}{$\theta$} &\multirow{2.6}{*}{NAC} & \multirow{2.6}{*}{ACR} & \multicolumn{9}{c}{Certified Accuracy at Radius r}\\
		\cmidrule(lr){5-13}
		& & & & 0.0 & 0.25 & 0.5 & 0.75 & 1.0 & 1.25 & 1.5 & 1.75 & 2.00\\
        \midrule
        \multirow{12}{*}{0.50}   
        & 0.00 & 99.0 & 0.000 & 99.0 & 0.0 & 0.0 & 0.0 & 0.0 & 0.0 & 0.0 & 0.0 & 0.0\\
        & 0.10 & 95.2 & 0.141 & 95.2 & 21.4 & 14.2 & 8.0 & 0.0 & 0.0 & 0.0 & 0.0 & 0.0\\
        & 0.20 & 87.2 & 0.301 & 86.0 & 42.2 & 31.0 & 18.0 & 0.0 & 0.0 & 0.0 & 0.0 & 0.0\\
        & 0.30 & 77.8 & 0.422 & 75.4 & 60.0 & 42.8 & 25.6 & 0.0 & 0.0 & 0.0 & 0.0 & 0.0\\
        & 0.40 & 77.8 & 0.422 & 75.4 & 60.0 & 42.8 & 25.6 & 0.0 & 0.0 & 0.0 & 0.0 & 0.0\\
        & 0.50 & 77.8 & 0.422 & 75.4 & 60.0 & 42.8 & 25.6 & 0.0 & 0.0 & 0.0 & 0.0 & 0.0\\
        & 0.60 & 77.8 & 0.422 & 75.4 & 60.0 & 42.8 & 25.6 & 0.0 & 0.0 & 0.0 & 0.0 & 0.0\\
        & 0.70 & 77.8 & 0.422 & 75.4 & 60.0 & 42.8 & 25.6 & 0.0 & 0.0 & 0.0 & 0.0 & 0.0\\
        & 0.80 & 77.8 & 0.422 & 75.4 & 60.0 & 42.8 & 25.6 & 0.0 & 0.0 & 0.0 & 0.0 & 0.0\\
        & 0.90 & 77.8 & 0.422 & 75.4 & 60.0 & 42.8 & 25.6 & 0.0 & 0.0 & 0.0 & 0.0 & 0.0\\
        & 1.00 & 77.8 & 0.422 & 75.4 & 60.0 & 42.8 & 25.6 & 0.0 & 0.0 & 0.0 & 0.0 & 0.0\\                          
        \midrule
        \multirow{12}{*}{0.95}
        & 0.00 & 99.0 & 0.000 & 99.0 & 0.0 & 0.0 & 0.0 & 0.0 & 0.0 & 0.0 & 0.0 & 0.0\\
        & 0.10 & 98.6 & 0.021 & 98.6 & 3.0 & 2.2 & 1.2 & 0.0 & 0.0 & 0.0 & 0.0 & 0.0\\
        & 0.20 & 98.0 & 0.061 & 98.0 & 9.0 & 6.2 & 2.4 & 0.0 & 0.0 & 0.0 & 0.0 & 0.0\\
        & 0.30 & 97.4 & 0.117 & 97.2 & 17.2 & 10.8 & 5.8 & 0.0 & 0.0 & 0.0 & 0.0 & 0.0\\
        & 0.40 & 95.4 & 0.178 & 95.2 & 26.4 & 17.0 & 9.4 & 0.0 & 0.0 & 0.0 & 0.0 & 0.0\\
        & 0.50 & 91.6 & 0.240 & 91.6 & 35.4 & 24.2 & 14.2 & 0.0 & 0.0 & 0.0 & 0.0 & 0.0\\
        & 0.60 & 87.4 & 0.295 & 86.8 & 41.4 & 30.0 & 18.0 & 0.0 & 0.0 & 0.0 & 0.0 & 0.0\\
        & 0.70 & 83.2 & 0.356 & 81.4 & 49.8 & 37.2 & 21.8 & 0.0 & 0.0 & 0.0 & 0.0 & 0.0\\
        & 0.80 & 80.2 & 0.403 & 78.0 & 56.4 & 41.2 & 25.2 & 0.0 & 0.0 & 0.0 & 0.0 & 0.0\\
        & 0.90 & 77.8 & 0.421 & 75.4 & 59.8 & 42.6 & 25.6 & 0.0 & 0.0 & 0.0 & 0.0 & 0.0\\
        & 1.00 & 77.8 & 0.422 & 75.4 & 60.0 & 42.8 & 25.6 & 0.0 & 0.0 & 0.0 & 0.0 & 0.0\\                                   
        \bottomrule
        \end{tabular}
\end{threeparttable}
}
\vspace{-4mm}
\end{table}

Instead of using an entropy-based selection-mechanism as discussed in \cref{sec:ace_smoothing}, we experimented with following \citet{mueller2021certify} in training a separate binary classifier on this selection task.
To generate the labels, we first sample $n$ perturbed instances of every training input and compute the corresponding prediction by the certification-network and determine the count of correct prediction $n_y$. We then threshold the accuracy of an individual sample over perturbations $n_y/n$ with hyperparameter $\eta$ to obtain the label $\mathbb{I}_{n_y/n >= \eta}$.
We use these labels to then train a binary classifier of the same architecture and using the same training method as for the certification-network.

We instantiate this approach with $n=1000$, $\eta \in \{0.5, 0.95\}$, and \cohen training and compare the obtained \tool models with ones using entropy-based selection in \cref{tab:cifar10_selection_net_025}, visualized in \cref{fig:selection_net_main_figure}.
We observe that the entropy-based selection performs significantly better across all natural accuracies than this selection-network based approach.
Additionally, the entropy-based mechanism does not need any additional training as it is based on the certification-network.
Therefore, we focus all other analysis on entropy-based selection-mechanisms.

\subsection{Varying Inference Noise Magnitude}
\label{sec:appendix-baselines}

\begin{figure}[t]
	\centering
	\scalebox{0.97}{
	\makebox[\textwidth][c]{
		\begin{subfigure}[t]{0.3675\textwidth}
			\centering
			\includegraphics[width=\textwidth]{./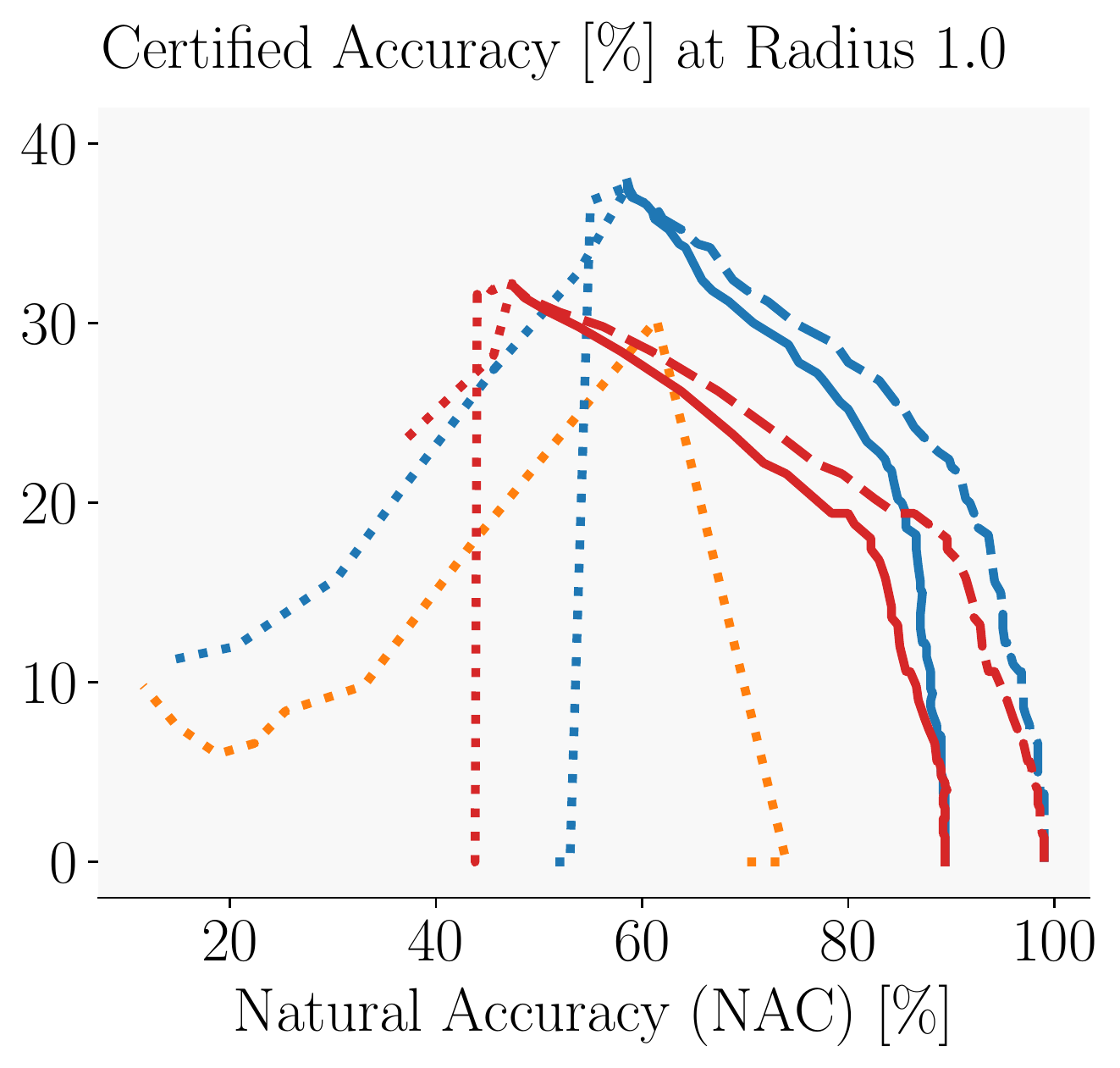}
		\end{subfigure}
		\hfill
		\begin{subfigure}[t]{0.6125\textwidth}
			\centering
			\includegraphics[width=\textwidth]{./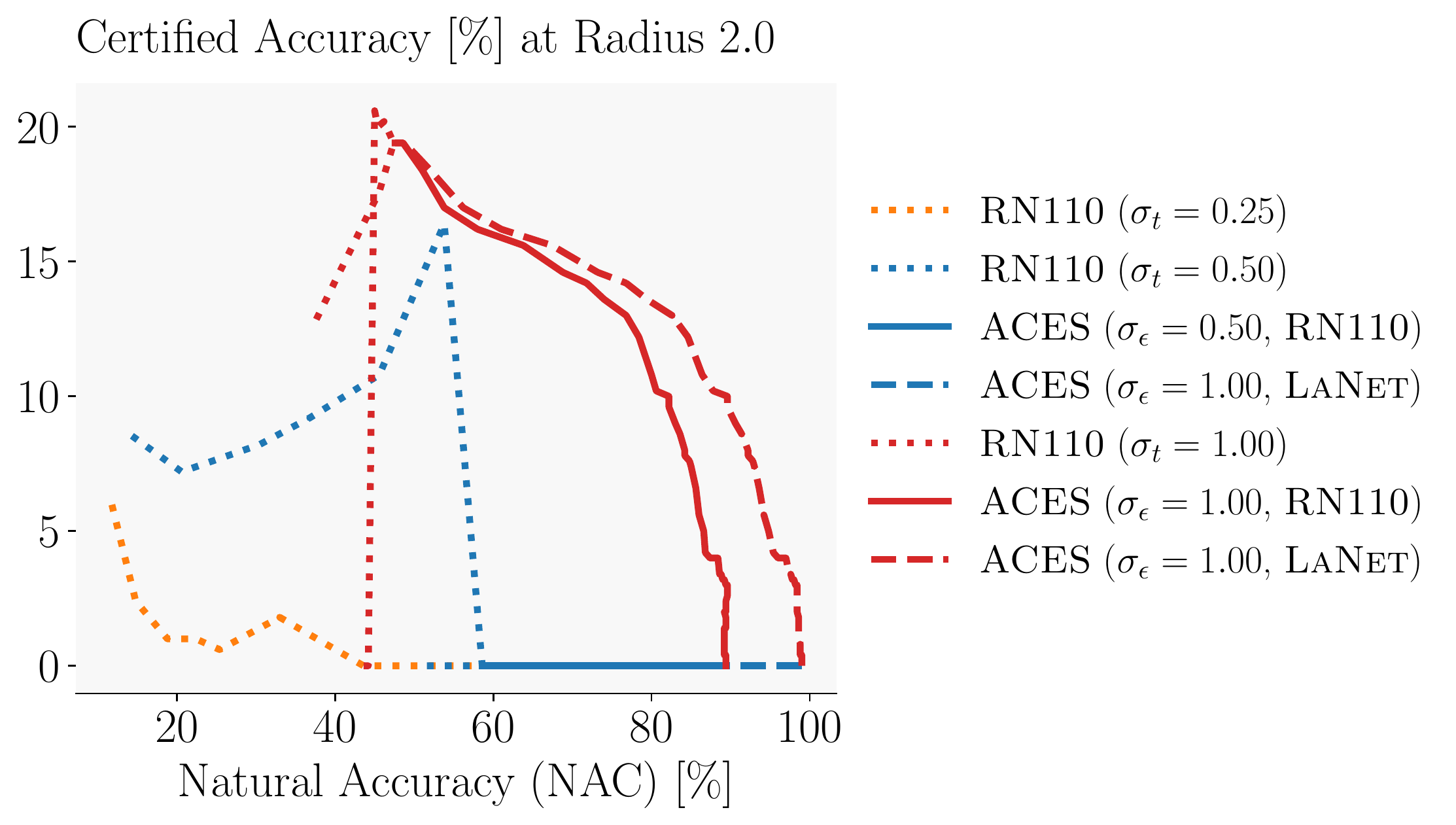}
		\end{subfigure}
	}
	}
	\vspace{-3mm}
	\caption{Certified accuracy at fixed radii over natural accuracy for \tool (solid and dashed lines) and individual smoothed models (dotted lines) using \smoothadv training. We compare stand alone \RNB (dotted lines) trained with $\sigma_t\in \{0.25, 0.5, 1.0\}$ (orange, blue and red) and evaluated on a wide range of $\sigma_{\epsilon} \in [0.0, 1.5]$ with corresponding \tool models evaluated at training noise level and based on \RNB for both certification- and core-networks (solid lines) or a \LANET core-network (dashed lines).}
	\label{fig:ace_smoothadv_cifar_appendix}
\end{figure}

\begin{table}[p]
	\centering
	\small
	\centering
	\caption{Varying the evaluation noise magnitude $\sigma_{\epsilon}$ for a \RNB trained using \smoothadv and $\sigma_{t} \in \{0.25, 0.5\}$ or natural training $\sigma_t=0.0$ on \cifar.} 
	\label{tab:cifar10_baseline}
	\scalebox{0.9}{
	\begin{threeparttable}
    \begin{tabular}{ccccccccccccc}
        \toprule
        \multirow{2.6}{*}{Training $\sigma_{t}$} & \multirow{2.6}{*}{Evaluation $\sigma_{\epsilon}$} &\multirow{2.6}{*}{NAC} & \multirow{2.6}{*}{ACR} & \multicolumn{9}{c}{Radius r}\\
		\cmidrule(lr){5-13}
		& & & & 0.0 & 0.25 & 0.5 & 0.75 & 1.0 & 1.25 & 1.5 & 1.75 & 2.0\\
        \midrule
        \multirow{11}{*}{0.00}
        & 0.125 & \textbf{21.2} & 0.038 & \textbf{18.0} & 7.4 & 0.0 & 0.0 & 0.0 & 0.0 & 0.0 & 0.0 & 0.0\\
        & 0.25 & 13.0 & 0.027 & 13.0 & 5.4 & 0.4 & 0.0 & 0.0 & 0.0 & 0.0 & 0.0 & 0.0\\
        & 0.375 & 10.0 & 0.060 & 10.0 & \textbf{9.6} & 7.8 & 1.8 & 0.0 & 0.0 & 0.0 & 0.0 & 0.0\\
        & 0.5 & 11.0 & 0.030 & 11.4 & 6.0 & 1.0 & 0.0 & 0.0 & 0.0 & 0.0 & 0.0 & 0.0\\
        & 0.625 & 7.4 & 0.025 & 7.0 & 4.8 & 1.6 & 0.0 & 0.0 & 0.0 & 0.0 & 0.0 & 0.0\\
        & 0.75 & 8.4 & 0.047 & 8.6 & 7.0 & 5.2 & 1.8 & 0.6 & 0.0 & 0.0 & 0.0 & 0.0\\
        & 0.875 & 9.2 & 0.084 & 9.2 & 9.0 & 7.8 & 6.4 & 4.4 & 1.4 & 0.2 & 0.0 & 0.0\\
        & 1.0 & 9.2 & 0.165 & 9.2 & 9.2 & 9.2 & 9.2 & 9.0 & 8.6 & 7.0 & 6.2 & 2.8\\
        & 1.25 & 9.6 & 0.207 & 9.6 & \textbf{9.6} & \textbf{9.6} & 9.4 & 9.0 & 8.8 & 8.0 & 6.8 & 5.2\\
        & 1.5 & 9.6 & \textbf{0.210} & 9.6 & \textbf{9.6} & \textbf{9.6} & \textbf{9.6} & \textbf{9.6} & \textbf{9.2} & \textbf{8.8} & \textbf{8.0} & \textbf{6.4}\\        
        \midrule
        \multirow{11}{*}{0.25}
        & 0.125 & 72.0 & 0.301 & 71.8 & 63.0 & 0.0 & 0.0 & 0.0 & 0.0 & 0.0 & 0.0 & 0.0\\
        & 0.25 & \textbf{74.0} & \textbf{0.546} & \textbf{73.8} & \textbf{66.8} & \textbf{57.2} & \textbf{46.8} & 0.0 & 0.0 & 0.0 & 0.0 & 0.0\\
        & 0.375 & 61.4 & 0.542 & 59.6 & 53.0 & 43.8 & 36.8 & \textbf{30.2} & \textbf{22.2} & 0.0 & 0.0 & 0.0\\
        & 0.5 & 43.6 & 0.379 & 41.6 & 35.2 & 29.4 & 24.4 & 17.8 & 12.0 & 7.8 & 3.0 & 0.0\\
        & 0.625 & 33.0 & 0.250 & 31.2 & 24.2 & 21.0 & 15.0 & 9.8 & 6.6 & 4.4 & 2.4 & 1.8\\
        & 0.75 & 25.4 & 0.191 & 24.0 & 19.6 & 15.0 & 11.6 & 8.4 & 4.2 & 2.4 & 1.6 & 0.6\\
        & 0.875 & 22.4 & 0.164 & 19.8 & 17.2 & 13.0 & 9.8 & 6.6 & 4.8 & 2.0 & 1.2 & 1.0\\
        & 1.0 & 18.8 & 0.154 & 17.4 & 15.6 & 11.8 & 8.8 & 6.0 & 4.6 & 2.4 & 1.0 & 1.0\\
        & 1.25 & 14.8 & 0.169 & 13.6 & 12.0 & 11.4 & 8.2 & 7.6 & 6.2 & 5.0 & 4.4 & 2.4\\
        & 1.5 & 11.6 & 0.233 & 11.2 & 10.4 & 10.2 & 10.2 & 9.8 & 9.6 & \textbf{8.2} & \textbf{7.4} & \textbf{6.2}\\        
        \midrule
        \multirow{11}{*}{0.50}
        & 0.125 & 52.0 & 0.224 & 52.0 & 46.6 & 0.0 & 0.0 & 0.0 & 0.0 & 0.0 & 0.0 & 0.0\\
        & 0.25 & 53.0 & 0.416 & 52.2 & 47.6 & 43.6 & 39.2 & 0.0 & 0.0 & 0.0 & 0.0 & 0.0\\
        & 0.375 & 55.0 & 0.589 & 54.6 & 48.8 & 45.6 & 40.0 & 36.8 & 32.0 & 0.0 & 0.0 & 0.0\\
        & 0.5 & \textbf{58.6} & 0.726 & \textbf{57.2} & \textbf{50.6} & \textbf{45.8} & \textbf{42.4} & \textbf{37.6} & \textbf{32.2} & \textbf{27.8} & \textbf{22.6} & 0.0\\
        & 0.625 & 53.8 & \textbf{0.729} & 52.4 & 49.2 & 44.4 & 39.2 & 32.8 & 28.2 & 24.8 & 20.6 & \textbf{16.4}\\
        & 0.75 & 45.6 & 0.599 & 43.4 & 38.0 & 35.6 & 31.4 & 27.4 & 22.6 & 18.6 & 15.0 & 10.8\\
        & 0.875 & 36.8 & 0.473 & 33.8 & 31.4 & 26.8 & 24.8 & 20.8 & 16.2 & 13.6 & 10.8 & 9.2\\
        & 1.0 & 30.4 & 0.390 & 28.8 & 25.2 & 21.4 & 18.4 & 15.8 & 12.6 & 10.6 & 9.4 & 8.2\\
        & 1.25 & 20.6 & 0.325 & 18.6 & 16.8 & 15.6 & 12.8 & 12.0 & 10.6 & 9.4 & 8.0 & 7.2\\
        & 1.5 & 14.0 & 0.334 & 12.8 & 12.2 & 11.8 & 11.8 & 11.2 & 10.6 & 9.8 & 9.2 & 8.6\\           
        \bottomrule
        \end{tabular}
\end{threeparttable}
}
\vspace{-4mm}
\end{table}

\begin{table}[p]
	\centering
	\small
	\centering
	\caption{Varying the evaluation noise magnitude $\sigma_{\epsilon}$ for a \smoothadv trained \RNB with $\sigma_{t}=1.0$ on \cifar.} 
	\label{tab:cifar10_baseline_100}
	\scalebox{0.9}{
	\begin{threeparttable}
    \begin{tabular}{ccccccccccccc}
        \toprule
        \multirow{2.6}{*}{Training $\sigma_{t}$} & \multirow{2.6}{*}{Evaluation $\sigma_{\epsilon}$} &\multirow{2.6}{*}{NAC} & \multirow{2.6}{*}{ACR} & \multicolumn{9}{c}{Radius r}\\
		\cmidrule(lr){5-13}
		& & & & 0.0 & 0.5 & 1.0 & 1.5 & 2.0 & 2.5 & 3.0 & 3.5 & 4.0\\
        \midrule
        \multirow{11}{*}{1.00}
        & 0.125 & 43.6 & 0.193 & 43.6 & 0.0 & 0.0 & 0.0 & 0.0 & 0.0 & 0.0 & 0.0 & 0.0\\
        & 0.25 & 43.8 & 0.359 & 43.6 & 37.2 & 0.0 & 0.0 & 0.0 & 0.0 & 0.0 & 0.0 & 0.0\\
        & 0.375 & 44.0 & 0.501 & 43.8 & 37.8 & 31.6 & 0.0 & 0.0 & 0.0 & 0.0 & 0.0 & 0.0\\
        & 0.5 & 44.2 & 0.621 & 43.4 & 38.4 & 31.8 & 26.2 & 0.0 & 0.0 & 0.0 & 0.0 & 0.0\\
        & 0.625 & 45.0 & 0.716 & 44.0 & 39.6 & 32.0 & \textbf{27.0} & \textbf{20.6} & 0.0 & 0.0 & 0.0 & 0.0\\
        & 0.75 & 45.4 & 0.787 & 44.8 & \textbf{39.8} & 31.8 & 27.0 & 20.0 & 15.2 & 0.0 & 0.0 & 0.0\\
        & 0.875 & 46.2 & 0.832 & \textbf{45.2} & 38.6 & 32.0 & 26.6 & 20.2 & \textbf{15.6} & 11.0 & 0.0 & 0.0\\
        & 1.0 & \textbf{47.4} & \textbf{0.844} & \textbf{45.2} & 38.0 & \textbf{32.2} & 25.0 & 19.4 & 14.8 & \textbf{11.4} & \textbf{7.4} & 0.0\\
        & 1.25 & 45.6 & 0.762 & 42.2 & 33.8 & 28.2 & 22.2 & 17.6 & 13.0 & 9.4 & 4.8 & \textbf{2.6}\\
        & 1.5 & 37.2 & 0.597 & 33.2 & 29.0 & 23.6 & 18.0 & 12.6 & 9.2 & 5.6 & 3.4 & 1.4\\                
        \bottomrule
        \end{tabular}
\end{threeparttable}
}
\vspace{-4mm}
\end{table}

\begin{table}[t]
	\centering
	\small
	\centering
	\caption{Varying training and inference noise magnitude $\sigma$ for individual \consistency trained \RNM on \IN.} 
	\label{tab:imagenet_baseline_consistency}
	\scalebox{0.9}{
	\begin{threeparttable}
    \begin{tabular}{cccccccccccc}
        \toprule
        \multirow{2.6}{*}{$\sigma_{\epsilon}$} &\multirow{2.6}{*}{NAC} & \multirow{2.6}{*}{ACR} & \multicolumn{9}{c}{Radius r}\\
		\cmidrule(lr){4-12}
		& & & 0.0 & 0.5 & 1.0 & 1.5 & 2.0 & 2.5 & 3.0 & 3.5 & 4.0\\
        \midrule
        0.25 & 63.6 & 0.512 & 63.0 & 54.0 & 0.0 & 0.0 & 0.0 & 0.0 & 0.0 & 0.0 & 0.0\\
        0.50 & 57.0 & 0.806 & 55.4 & 48.8 & 42.2 & 34.8 & 0.0 & 0.0 & 0.0 & 0.0 & 0.0\\
        1.00 & 45.6 & 1.023 & 43.2 & 39.6 & 35.0 & 29.4 & 24.4 & 22.0 & 16.6 & 13.4 & 0.0\\             
        \bottomrule
        \end{tabular}
\end{threeparttable}
}
\vspace{-4mm}
\end{table}

Randomized smoothing is based on perturbing the inputs passed to an underlying model with random noise terms $\epsilon$.
Varying the magnitude of this noise is a natural way to trade-off robustness and accuracy, considered here as a baseline.

We first vary the evaluation noise level $\sigma_{\epsilon}$ and training noise level $\sigma_{t}$ separately for \smoothadv trained \RNB on \cifar and observe that the best ACR is achieved when evaluating a model at (or close to) the noise magnitude it was trained with (see \cref{tab:cifar10_baseline_100,tab:cifar10_baseline}). 
In \cref{fig:ace_smoothadv_cifar}, we illustrate a direct comparison of the thus obtained certified accuracies (dotted lines) with those of \tool models for \RNB (solid lines) and \EN (dashed lines) core-networks.
We generally observe that a) models trained with $\sigma_{t}$ performs best with evaluation noise $\sigma_{e} \approx \sigma_{t}$ in all settings, except where $\sigma_{t}$ is too small to mathematically allow for certification, and b) that reducing the inference noise magnitude often does not improve natural accuracy in sharp contrast to \tool models where much higher natural accuracies can be reached.

Based on this insight and due to the higher computational cost, we vary training and evaluation noise level $\sigma$ jointly for \IN using \consistency training and show results in \cref{tab:imagenet_baseline_consistency}.
Again, we observe that \tool models (orange and blue dots) outperform the thus obtained individual smoothed models (green triangles), reaching natural accuracies far beyond what individual smoothed models can, as is illustrated in \cref{fig:ace_imagenet_ca_appendix}.
Only when purely optimizing for certified accuracy by setting $\theta=1.0$ is \tool outperformed by individual models, as the needed Bonferroni correction increases the required confidence leading to a slight drop in ACR from $0.512, 0.806$, and $1.023$ to $0.509, 0.800$, and $0.997$ for $\sigma_{\epsilon}=0.25, 0.5$, and $1.00$, respectively.

}{}

\message{^^JLASTPAGE \thepage^^J}

\end{document}